\tikzset{
	semi/.style={
		semicircle,
		draw,
		minimum size=2em
	}
}
\newcolumntype{M}[1]{>{\centering\arraybackslash}m{#1}}
\newcommand{\pa}{\mathrm{pa}} 
\newcommand{\aps}{\mathrm{aps}} 
\newcommand{\an}{\mathrm{an}} 
\newcommand{\de}{\mathrm{de}} 
\def\ci{\perp\!\!\!\perp}
\newcommand{\pr}{\mathbbm{P}}
\newcommand{\ex}{\mathbbm{E}}
\newtheorem{theorem}{Theorem}
\newtheorem{definition}{Definition}
\newtheorem{proposition}{Proposition}
\begin{document}

\title{Fair Data Adaptation with Quantile Preservation}

\author{ Drago Ple{\v c}ko and Nicolai Meinshausen \\  Seminar f{\"u}r Statistik\\
       ETH Z{\"u}rich\\
       Z{\"u}rich, 8092, Switzerland}


\maketitle

\begin{abstract}
Fairness of classification and regression has received much attention recently and various, partially non-compatible, criteria have been proposed. The fairness criteria can be enforced for a given classifier or, alternatively, the data can be adapated to ensure that every classifier trained on the data will adhere to desired fairness criteria. We present a practical data adaption method based on quantile preservation in causal structural equation models. The data adaptation is based on a presumed counterfactual model for the data. While the counterfactual model itself cannot be verified experimentally, we show that certain population notions of fairness are still guaranteed even if the counterfactual model is misspecified. The precise nature of the fulfilled non-causal fairness notion (such as demographic parity, separation or sufficiency) depends on the structure of the underlying causal model and the choice of  resolving variables. We describe an implementation of the proposed data adaptation procedure based on Random Forests \citep{breiman2001random} and demonstrate its practical use on simulated and real-world data.
\end{abstract}

\section{Introduction}
 Care needs to be taken when machine learning techniques are used in socially sensitive domains, because algorithms are sometimes capable of learning societal biases we would not want them to learn. For example, women tend to be disadvantaged in  credit score ratings, partially due to the fact that women are currently perceived to have lower income on average \citep{blau2003}. A gender-neutral credit scoring would be desirable.  The precise notion of fairness one would like to achieve is  often debatable, though.  A much publicised and discussed example of this is the COMPAS dataset \citep{ProPublica} which involves predicting whether inmates will recidivate after being released on parole. The prediction is based on demographic data and information about prior convictions. Standard methods, which do not ensure all racial groups are treated in the same way, have been shown to lead to highly-discriminatory predictions against the black population \citep{ProPublica}. After being criticised for not achieving a fairness criterion called equality of odds, the company that produced the predictions, Northpoint, has claimed that their predictions satisfy a criterion called calibration \citep{dieterich2016compas}.

Current approaches for building fair predictors broadly fall into three categories. Pre-processing methods focus on transforming the data in order to remove any unwanted bias \citep{zemel2013,calmon2017}. In-processing methods attempt to build in fairness constraints into the training step \citep{fish2016,zafar2017,empiricalrisk}. Post-processing methods focus on transforming an already constructed predictor \citep{hardt2016}. Our work falls into the pre-processing category. The work that relates to our approach are the path-specific counterfactuals methods, proposed by \cite{shpitser2018} and \cite{pathspecific}, which aim to detect and eliminate the path specific effects (PSE) of the protect attribute on the response.

In particular, our work
\begin{enumerate}
\item Provides a practical implementation of fair data adaption based on Random Forests and an underlying causal model which is assumed to be known. This allows to incorporate resolving variables \citep{kilbertus2017}. The software is provided as an R package \texttt{fairadapt}.
\item Presumes a specific counterfactual model. We can show that a counterfactual notion of fairness is satisfied if the model is correct (unfortunately not verifiable), but that certain population fairness notions are satisfied in any case, even if the counterfactual model is wrong.
\item Allows the achieved notion of fairness to depend on the causal graph. This might offer a more principled way of agreeing on a suitable fairness notion (if people can agree on the structure of the underlying causal graph).
\end{enumerate}
We also demonstrate the empirical value of our approach, exhibiting very competitive performance.

\subsection{Setup}
Let random variable $Y\in \mathcal{Y}$ be the outcome of interest that one would like to predict in the future in a fair way.  For simplicity, we mostly assume binary classification so that  $\mathcal{Y} = \lbrace 0, 1 \rbrace$. The binary outcome $Y$ represents perhaps recidivism whilst on parole or repayment of a loan. Let $A$ be the protected attribute such as race or gender and $X=(X^{(1)}, \ldots, X^{(p)}) \in \mathbbm{R}^p$ be predictor variables for the outcome of interest. We assume we have access to $n$  i.i.d. samples $(A_i,{X}_i,Y_i)$ $i=1.\ldots,n$ coming from a distribution $F_{A,{X},Y}$. For the majority of the exposition we assume that $A$ has two levels $\lbrace 0,1 \rbrace$, but generalizations are straightforward. The key goal is to provide a data-transformation or data-adaptation
\[T : \mathbbm{R}^p\times \mathcal{Y} \mapsto \mathbbm{R}^p\times \mathcal{Y}. \] The transformation should be such that if we train a classifier with the adapted data
\[  T\big((X_i,Y_i)\big), \ i=1,\ldots,n \]
instead of the original data $\lbrace (X_i,Y_i),\ i=1,\ldots,n \rbrace$, we want to be able to automatically guarantee appropriate fairness criteria. At the same time, we want the change induced by the data adaptation to be minimal in an appropriate sense.

\subsection{Causal framework} \label{causal}
We mainly use a standard non-parametric structural equation model (NPSEM)  for $Z=(A,X,Y)\in \mathbb{R}^{p+2}$ as in \citet{pearl2000} and let each variable $Z^{(k)}$ be defined as
\begin{equation}\label{eq:scm} Z^{(k)} = g_k( Z^{\rm{pa}_k}, U^{(k)}),\end{equation}
where $U \in \mathbbm{R}^{p+2}$ is a latent variable that determines the realization of the variable $Z^{(k)}$ and $\pa_k$ is the set of parents of the variable $Z^{(k)}$. We denote by $f_k(z^{(k)} \mid z^{\pa_k})$ the density corresponding to $Z^{(k)}$. Without limitation of generality, we let $U$ have marginally a uniform distribution on $[0,1]$. We further assume that $g_k(z,u)$ is monotonically increasing with $u$ for each $z$. Thus, $U^{(k)}$ can be interpreted as the quantile of the $k$-th variable, conditional on the value of its parents $Z^{\rm{pa}_k}$. We also assume that the components of $U$ are independent, that is we assume lack of confounding, also known as the Markovian assumption in \citep{pearl2009} . Assume for the following discussion first that the random variables are continuous and a density exists. Then $Z(U=u)=z$ is the realized value of $Z$ under the realization $u$ of the quantiles $U$ and there exists a one-to-one mapping between the value of $z$ and $u$. We will return later to the case of discrete random variables and randomization, where the deterministic relationship between $z$ and $u$ breaks down.

We will try to keep notation as lean as possible. Suppose $R$ is a subset of $X$. We denote with
$X(A=a,R=r)$ the random variable that is determined by the set of structural equations~\eqref{eq:scm} where
the structural equation for variables $A$ and $R$ are replaced by
setting their corresponding values equal to $a$ and $r$ respectively. We can, for example, compare  the distribution of the predictor variable $X$ under the interventions $\text{do}(A=a)$ and $\text{do}(A=a')$ by comparing the distributions of
\[X(A=a) \text{ and } X(A=a')\]
without making assumptions about the joint distribution of these two random variables and could use
the formal framework of single-world intervention graphs (SWIGs) introduced  in \citet{swigs}.

However,  we need cross-world statements for some of the discussion.
The value $ X(A=a,U=u)$ is the outcome under a specific realization of the latent variable $U$.
We can view the value $X(A=a,U=u)$ as the realized value of $X$ under the quantiles specified by $u$ and an intervention $\text{do}(A=a)$. In this sense, a simple counterfactual model defines $X(A=a,U=u)$ to be the counterfactual of $X(U=u)$ under the do-intervention that is setting the attribute $A$ to the value $a$. This way we are defining a joint distribution for $X$ and $X(A=a)$ and can make cross-world statements. Of course, we could define a different counterfactual model by keeping the marginal distributions of $X$ and $X(A=a)$ identical, but changing the joint distribution. For the specific counterfactual model used in this paper we keep the conditional quantiles $U$ constant.
\begin{definition}[Quantile preservation assumption (QPA)] \label{def:qpa}
	The conditional quantiles $U$ of the SEM in Equation \eqref{eq:scm} remain unchanged under a $do(A = a)$ intervention.
\end{definition}
\noindent Such a counterfactual cross-world model (assumptions about the joint distribution of random variables under different interventions) is obviously not empirically verifiable. We will try to make clear where we use single-world and where we use cross-world assumptions throughout the text.

Another assumption we make is that the protected attribute $A$ is a root of the causal graph $\mathcal{G}$. A consequence of this is that the $do(A = a)$ intervention is equivalent to conditioning on $A = a$. This idea, which is easily shown using the 2nd rule of do-calculus \citep{pearl2009}, shows up frequently in our discussion.

\subsection{Example}
Consider the following example with four variables $A,\ X_1,\ X_2,\ Y$. Variable $A$ is the protected attribute, in this case gender ($A = 0$ corresponding to male, $A = 1$ to female). Let $X_1$ be the education level and $X_2$ the current salary of an individual. Outcome $Y$ is the successful repayment of a loan. Edges in the graph indicate how variables affect each other.
	\begin{figure}[H] \centering
			\begin{tikzpicture}
			[>=stealth, rv/.style={circle, draw, thick, minimum size=6mm}, rvc/.style={triangle, draw, thick, minimum size=10mm}, node distance=18mm]
			\pgfsetarrows{latex-latex};
			\begin{scope}
			\node[rv] (1) at (-2,0) {$A$};
			\node[rv] (2) at (0,1) {$X_1$};
			\node[rv] (3) at (0,-1) {$X_2$};
			\node[rv] (4) at (2,0) {$Y$};
			\draw[->] (1) -- (2);
			\draw[->] (1) -- (3);
			\draw[->] (2) -- (3);
			\draw[->] (2) -- (3);
			\draw[->] (3) -- (4);
			\draw[->] (2) -- (4);
			\end{scope}
			\end{tikzpicture}
		\end{figure}
 \noindent The main problem is that the attribute $A$, gender, has an effect on both $X_1$ and $X_2$ (education and salary). We want to find a data transformation that makes the data ``look" the same for all levels of $A$. Subject to this, we also want to minimize the distortion in the data coming from the transformation. Namely, for all females ($A = 1$), we would first want to compute their education level  had they been male. More explicitly, for a female with education level $ x_1$, we give it the transformed value $ \widetilde{x}_1$ chosen such that $$\pr(X_1 \geq x_1 \mid A = 1) = \pr(X_1 \geq \widetilde{x}_1 \mid A = 0). $$
The main idea is that the \textit{relative education within the subgroup} would stay the same if we changed someone's gender. If you are a female better than 60\% of the females in the dataset, we assume you would be better than 60\% of males had you been male. After computing everyone's education (in the ``male" world), we continue by computing the transformed salary values $\widetilde{X}_2$. The approach is again similar, but this time we condition on the education level as well. That is, a female with values $(X_1, X_2) = (x_1, x_2)$ is assigned a salary level $ \widetilde{x}_2$ such that
$$\pr(X_2 \geq x_2 \mid X_1 = x_1,\ A = 1) = \pr(X_2 \geq \widetilde{x}_2 \mid X_1 = \widetilde{x}_1,\ A = 0),$$
where the value $\widetilde{x}_1$ was obtained in the previous step. The transformed data $\widetilde{X}_1, \widetilde{X}_2$ can then be used to construct a classifier. Generally, the transformation we are describing is carried out using \textit{quantile regression forests} \citep{qrf}. A full implementation of this method for a general situation is available in the \texttt{fairadapt} package on CRAN. The aim of this paper is to formalize all of the ideas above mathematically.

As we transform our data, we also end up with the covariate values individuals would attain if we set their gender to male. We are basically trying to answer the hypothetical question ``What would my attributes look like, had I been male?". This aspect of our method can help explain \textit{why} individuals obtain their predictions. To this we refer to as method \textit{interpretability}.
\subsection{Related work}
We summarize the work which is relevant to our discussion. Our method is related to fairness notions of \textit{demographic parity} \citep{darlington1971} and \textit{calibration} \citep{chouldechova2017}. In a simple case, we can also see a connection to \textit{equality of odds} \citep{hardt2016}. A discussion of fair adaptation and observational criteria is given in Section \ref{relationobservational}. Two causal notions of fairness that play an important part in our method are \textit{counterfactual fairness} \citep{counterfactualfairness} and \textit{resolving variables} \citep{kilbertus2017}. More about relation to causal notions of fairness is discussed in Section \ref{causalnotions}. Methods based on mediation analysis build on some related ideas \citep{zhang2018eo, zhang2018fairness}. In their approach, the two most similar methods to ours are the path specific counterfactual methods \citep{shpitser2018, pathspecific}. This relation is discussed more in Section \ref{pathspecmethods}.

\subsection{Structure of the paper}
In Section \ref{causalnotions} causal notions of fairness that are important for our method are discussed. In Section \ref{adaptation} the adaptation procedure is introduced and a summary of the goals when applying our method is provided. The achieved fairness notions are discussed and in particular how they depend (or do not depend) on the assumptions used. The population level adaptation procedure (under no estimation error) is given. The assumptions that are used are also briefly discussed. We illustrate what our desired fairness notions amount to in the simplest linear additive setting. In Section \ref{relation} the relation of our work to previously proposed methods and criteria is analyzed. Section \ref{methodformalisation} goes in depth about discussing the practical aspects of our method. How to handle discrete variables in the adaptation procedure is particularly emphasized. The sample level non-parametric adaptation procedure is given. The training step options after applying the data adaptation are described. Two possible methodological extensions are discussed in the end of the section. In Section \ref{experimental} empirical performance of our method is demonstrated both on simulated and real-world data.
\section{Causal notions of fairness} \label{causalnotions}
We look at two counterfactual notions of fairness that play an important role in our methodology.
\subsection{Counterfactual fairness}
	 \label{firstCF} \textit{Counterfactual fairness} was first introduced as a notion by \citet{counterfactualfairness}. The notation $\widehat{Y}(A=a)$ indicates again the prediction under a do-intervention $\text{do}(A=a)$ on the protected attribute.
\begin{definition}[Counterfactual fairness, \citet{counterfactualfairness}] \label{CFF}
		A predictor $\widehat{Y}$ is counterfactually fair if
		\begin{equation} \label{CFFcrit2}
		\widehat{Y}(A = a) \mid A=a, X=x \quad \stackrel{d}{=} \quad  \widehat{Y}(A = a') \mid A = a, X=x \;\;\forall a,a',x
		\end{equation}
		Here $\widehat{Y}(A=0)$ indicates that $\widehat{Y}$ comes from the distribution resulting from a $do(A = 0)$ intervention, while the conditioning occurs in the observational, non-interventional, distribution.
	\end{definition}
	The idea behind this notion is that if we intervene to change someone's race or gender, this should not affect the prediction they obtain. We emphasize that the notion in Equation \eqref{CFFcrit2} in our setting is a single-world counterfactual notion. For the original authors this notion is a cross-world one (due to the existence of latent variables $U$ which remain distributions even after the conditioning on $A = a, X = x$).

\noindent A weaker form of counterfactual fairness would just require that the distribution of $\widehat{Y}$ under an intervention on the protected attribute remains unchanged, that is
\begin{definition}[Population fairness] \label{def:popfair}
A predictor $\widehat{Y}$ is said to satisfy population fairness if
$$\widehat{Y}(A = a) \overset{d}{=} \widehat{Y} \quad \forall a.$$
\end{definition}
\noindent The distributional equivalence from Definition \ref{def:popfair} does not rest on cross-world assumptions and is equal to the observational criterion of demographic parity, in the case when $A$ is a root node in the causal graph (shown later in Proposition \ref{prop:demparity}).
In contrast, a much stronger notion can be defined as follows
\begin{definition}[Strong counterfactual fairness] \label{def:strongfair}
A predictor $\widehat{Y}$ is said to satisfy strong counterfactual fairness if $$\widehat{Y}(A=a, U = u) = \widehat{Y}(U = u)\;\;\forall a,u \ .$$
\end{definition}
\noindent Definition \ref{def:strongfair} requires the counterfactual prediction to be identical when setting the protected attribute to any value. Note that this is an individual level fairness notion.

\subsection{Resolving variables}\label{secondCF}
\citet{kilbertus2017} discuss that in some situations the protected attribute $A$ can affect variables in a non-discriminatory way. For instance, in the Berkeley admissions dataset \citep{bickel1975sex} we observe that females often apply for departments with lower admission rates and consequently have a lower admission probability. However, we perhaps would not wish to account for this difference in the adaptation procedure if we were to argue that department choice is a choice everybody is free to make. This motivated the following definition:
	\begin{definition}[Resolving variables, \citet{kilbertus2017}] \label{resolving}
		Let $\mathcal{G}$ be the causal graph of the data generating mechanism. Let the descendants of variable $A$ be denoted by $\de(A)$. A variable $R$ is called resolving if
		\begin{enumerate}[(i)]
			\item $R \in \de(A)$
			\item the causal effect of $A$ on $R$ is considered to be non-discriminatory
\end{enumerate}
\end{definition}

\noindent The idea is that the value of a resolving variable, or a resolver, $R$ should not change under our adaptation procedure. More generally, we can consider a set of resolving variables ${R}$. The desired counterfactual fairness criteria with respect to this definition can now be stated as
\begin{align} \label{RCFF1}
&\text{Population: }\widehat{Y}(A = a, {R} = {r})  \quad \stackrel{d}{=}\quad  \widehat{Y}(A = a', {R} = {r})\;\; \forall r \\ \label{RCFF2}
&\text{Cond.: }\widehat{Y}(A = a, {R} = {r}) \mid A = a, {X} = {x} \quad \stackrel{d}{=}\quad  \widehat{Y}(A = a', {R} = {r}) \mid A = a, {X} = {x} \;\;\forall a,x,r \\ \label{RCFF3}
&\text{Strong: }\widehat{Y}(A = a, {R} = {r}, U = u)  \quad = \quad  \widehat{Y}(A = a', {R} = {r}, U = u)  \;\;\forall a,a',r,u
\end{align}
to which we refer to as population, conditional and strong resolved fairness respectively. In the presence of resolving variables, we have an additional $do({R}={r})$ intervention, while the three different levels of fairness stay the same. The strong notion requires that the counterfactual predictions remain unchanged under a do-intervention on the protected attribute.

It is not immediately clear which variables should be considered as resolving. It can even happen that the same variable can be resolving or non-resolving in different applications. For instance, when recruiting students for an athletics training programme, we perhaps do not wish to give males an advantage based on physical ability. In this case, physical strength is not a resolving variable. However, if we are hiring workers for a physical job, we might want to consider physical strength as a resolving variable.

\section{Adaptation} \label{adaptation}
The main goal of this paper is to combine the two causal notions given in Sections \ref{firstCF} and \ref{secondCF} to describe a preprocessing procedure which gives a fair representation of the data. After this, any method can be used to construct a fair classifier $\widehat{Y}$, with slight care in the training step.

\paragraph{Adaptation aim.} Suppose $R$ is the set of resolving variables. We want to find a transformation $FT: \mathbbm{R}^p \mapsto \mathbbm{R}^p$ such that the transformed data $FT(X)$ satisfy
\begin{equation} \label{indepcond2}
	FT(X(A=a,R=r)) \quad\stackrel{d}{=} \quad FT(X(A=a',R=r)) \quad \text{ for all } a,a',r.
\end{equation}
We construct a classifier (or a regressor) $f$ for $Y$ using the transformed data $FT(X)$ as the input in the training step. This immediately guarantees that, for the classifier $\widehat{Y} = f \circ FT$, the distribution of
\begin{equation}\label{eq:resolv} \widehat{Y}(A=a,R=r) \end{equation}
does not depend on the value of $a$ for any $r$.
Hence we achieve
population fairness no matter which classifier $f$ is used to obtain $\widehat{Y}$.
We will base the fair transformation $FT(x)$ on a presumed counterfactual model by defining, for $X(U=u)=x$, the transformation as
\begin{equation}\label{eq:fair-transform-resolvers} FT(X(U=u)) := X(A=0,U=u,R=R(U=u)).\end{equation}
That means we: (i) keep the latent quantile variables $U$ identical; (ii) set the protected attribute $A$ to its baseline value; (iii) keep the value of the resolvers equal to their values $R(U=u)$ under no intervention. The population level notion (\ref{RCFF1}) is guaranteed to be satisfied.
The adapted data $FT(X)$ can be interpreted as counterfactuals. If we believe the counterfactual model obtained when using Definition \ref{def:qpa}, we achieve the strong counterfactual notion of fairness, namely
\begin{equation}\label{eq:cf2} \widehat{Y}( FT(X(U=u))) = \widehat{Y}( FT(X(A=a,U=u))) \end{equation} for any value of $a$. That is the prediction $\widehat{Y}$ will be unchanged for the individual corresponding to $U=u$ under a do-intervention that is setting the protected attribute $A$ to its baseline value $A=0$ and keeping the resolving variables $R = r$ fixed. This follows directly from definition \eqref{eq:fair-transform-resolvers} of the data transformation, as $FT(X(U=u))$ is equal to $X(A=0,U=u, R = r)$ and also $FT(X(A=0,U=u, R = r))=X(A=0,U=u, R = r)$. But even if the counterfactual model is misspecified (and we cannot empirically verify this in principle), we still achieve the population fairness notion. We can hence summarize the goal as in the following table.
\medskip
\begin{center}
	\begin{tabular}{M{2.5cm}|M{3cm}|M{2.5cm} }
		 & population fairness & strong counterfactual fairness\\[1mm] 	\hline
		counterfactual model true &  \ding{52} & \ding{52}\\
		counterfactual model false & \ding{52} & \ding{56}  \\
	\end{tabular}
\end{center}
\medskip
To summarize: if the counterfactual model is right, we achieve
counterfactual fairness in the sense of Equation \eqref{RCFF3} with the data adaptation defined in Equation \eqref{eq:fair-transform-resolvers}. If the counterfactual model is wrong, we still achieve population fairness as in Equation \eqref{RCFF1}. An interesting special case is when there are no resolvers, $R = \emptyset$. In this case $FT(X) \ci A$ and \textit{demographic parity} $\widehat{Y} \ci A$ is guaranteed to hold, irrespective of the counterfactual model we are using.

When writing $X(U=u)=x$, note that $x$ and $u$ have a one-to-one correspondence for continuous random variables that permit a density. This means that $u$ is a deterministic function of the observed realization $x$. We get back to the need for randomization in the case of discrete random variables, as they do not have densities.

\subsection{Population level adaptation} \label{populationlevel}
The input of our procedure is the causal graph $\mathcal{G}$, a choice of resolving variables ${R}$ and data $(A_k, X_k, Y_k)_{k=1:n} = (A, X, Y)$. Even though we are describing the procedure on population level (meaning we are ignoring finite sample estimation errors) we still work with data samples to emphasize our counterfactual construction. We also assume that the densities $f_k(x_k \mid \pa(x_k))$ of the SEM in Equation \eqref{eq:scm} are known. The output of our procedure is the transformed data $FT(X)$ and $FT(Y)$. So far we have only considered adapting the covariates $X$. In the procedure we also adapt the response $Y$. This idea will be discussed shortly after the algorithmic procedure. In Section \ref{procedure} we explain how the procedure is carried out non-parametrically on sample level.
\begin{algorithm}
	\setstretch{1.1}
	\DontPrintSemicolon
	\KwIn{causal graph $\mathcal{G}$, density of the data generating mechanism $f(x_1,...,x_k) = \prod f(x_i \mid \pa(x_i))$, choice of resolving variables $R$, data $(A_k, X_k, Y_k)_{k=1:n} = (A, X, Y)$}
	\KwOut{adapted data $FT(X)$, $FT(Y)$}
	$FT(A_k) \gets 0$ for each $k$ \\
	$FT(R_k) \gets R_k$ for each $k$\\
	\For{$V \in \de(A) \setminus {R}$ in topological order}{
		using the density $f(v \mid \pa(v))$ obtain the inverse quantile function $g_V$ of $V$, such that $V = g_V(U^{(V)},\pa(V))$ \; \label{invquant}
		obtain the latent quantile $U^{(V)}_k$ of $V_k$ for each $k$\; \label{latentquant}
		obtain the transformed value $FT(V_k)$ using the transformed values of its parents (obtained in previous steps), by setting
		$FT(V_k) \gets g_V(U^{(V)}_k,\ FT(\pa(V_k)))$ \; \label{CFassign}
}
	\Return{$FT(X), FT(Y)$}
	\caption{{\sc Population Fairness Adaptation}}
	\label{algo:fairnesspopulation}
\end{algorithm}
We show that the procedure in Algorithm \ref{algo:fairnesspopulation} satisfies the desired fairness criteria in the following theorem:
\begin{theorem}[Population and strong resolved fairness]
\label{thm:populationlevel}
Let $FT(\cdot)$ be the transformation from Algorithm 1. Suppose $f$ is any classifier built based on the transformed data $FT(X,Y)$. Then we have that the classifier $\widehat{Y} = f \circ FT(\cdot)$ satisfies population resolved fairness, that is $$ \widehat{Y}(A = a, R = r) \quad \overset{d}{=} \quad \widehat{Y}(A = a', R =r) \quad \forall a,a',r.$$ Further, under the quantile preservation assumption (QPA) $\widehat{Y}$ satisfies strong resolved fairness,
$$ \widehat{Y}(A = a, R = r, U = u) \quad = \quad \widehat{Y}(A = a', R =r, U = u) \quad \forall a,a',r,u.$$
\end{theorem}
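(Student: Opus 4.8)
The plan is to reduce both conclusions to a single structural fact about the map $FT$ of Algorithm~\ref{algo:fairnesspopulation}: when $FT$ is applied to a data point drawn from the interventional world $\mathrm{do}(A=a,R=r)$, it re-runs the structural equations \eqref{eq:scm} under $\mathrm{do}(A=0,R=r)$ while recycling the same realized latent quantiles. Since $\widehat Y=f\circ FT$ and, at the population level, $f$ is a fixed measurable map, we have $\widehat Y(A=a,R=r)=f\big(FT(X(A=a,R=r))\big)$ and likewise $\widehat Y(A=a,R=r,U=u)=f\big(FT(X(A=a,R=r,U=u))\big)$, so it suffices to prove $FT(X(A=a,R=r))\overset{d}{=}FT(X(A=a',R=r))$ for the first claim and $FT(X(A=a,R=r,U=u))=FT(X(A=a',R=r,U=u))$ for the second.

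Next I would unwind what Algorithm~\ref{algo:fairnesspopulation} computes on such an input. A variable $W\notin\de(A)$ is touched neither by the intervention — recall $A$ is a root of $\mathcal G$ and, by Definition~\ref{resolving}, every resolver lies in $\de(A)$, so $W$ is a descendant of no resolver either — nor by the algorithm, hence $FT(W)=W$. For $V\in\de(A)\setminus R$ the intervention $\mathrm{do}(A=a,R=r)$ replaces only the structural equations of $A$ and of the resolvers, so the equation $V=g_V(U^{(V)},\pa(V))$ still holds with the original latent $U^{(V)}$; consequently the quantile-recovery step (line~\ref{latentquant}) returns exactly the realized value $u^{(V)}$ of that latent. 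The re-assignment step (line~\ref{CFassign}) then sets $FT(V)=g_V\big(u^{(V)},FT(\pa(V))\big)$ with $FT(A)=0$ and $FT(R)=r$, and, traversing $\de(A)\setminus R$ in topological order, a short induction on $V$ — whose parents are either $A$, a resolver, a non-descendant, or an already-processed descendant, all of which agree with their values under $\mathrm{do}(A=0,R=r)$ — shows that $FT(V)$ is precisely the value $V$ takes in \eqref{eq:scm} under $\mathrm{do}(A=0,R=r)$ with latents $u$. That is,
\[ FT\big(X(A=a,R=r,U=u)\big)=X(A=0,R=r,U=u), \]
which is free of $a$. Under the QPA this is exactly the strong assertion, QPA being precisely what licenses the cross-world reading of the last display — that the quantile recovered in the $\mathrm{do}(A=a)$ world is \emph{the same} $u^{(V)}$ that governs $V$ in the $\mathrm{do}(A=0)$ world — so that $\widehat Y(A=a,R=r,U=u)=f(X(A=0,R=r,U=u))=\widehat Y(A=a',R=r,U=u)$.

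For the population claim the cross-world subtlety disappears and QPA is not needed. In the world $\mathrm{do}(A=a,R=r)$ the structural equations of the non-resolving descendants are untouched and, by the Markovian assumption, their latents are mutually independent and marginally uniform on $[0,1]$; hence the quantiles $(U^{(V)})_{V\in\de(A)\setminus R}$ recovered in line~\ref{latentquant} are i.i.d.\ uniform and independent of the non-descendants, with a joint law not depending on $a$ or $r$. By the displayed identity, $FT(X(A=a,R=r))$ is one fixed measurable function of $r$, those recovered quantiles, and the non-descendants $\{W:W\notin\de(A)\}$, the last also having $a$-free law; therefore $FT(X(A=a,R=r))\overset{d}{=}FT(X(A=a',R=r))$, and applying $f$ gives population resolved fairness. (If some $V$ is discrete, the inverse-quantile step is replaced by the randomized assignment discussed later in the paper; since that randomization draws from a fixed conditional law independently of everything else, both arguments carry over verbatim.)

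The one genuinely delicate point — and where I would take most care — is the bookkeeping of \emph{where} the counterfactual QPA is invoked: the unit-level strong identity needs the cross-world statement that the recovered quantiles coincide across the two interventional worlds, whereas the population identity uses only ordinary single-world interventional distributions, for which it is enough that these recovered quantiles have an $a$-free joint law. Everything else is the routine topological-order induction above.
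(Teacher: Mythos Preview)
Your proposal is correct and follows essentially the same route as the paper's proof: a topological-order induction showing that $FT$ applied to data from the $\mathrm{do}(A=a,R=r)$ world reconstructs $X(A=0,R=r,U=u)$, with the QPA invoked only for the cross-world (strong) statement and a single-world distributional argument sufficing for the population claim. If anything, your framing is slightly cleaner in that you work directly with $FT(X(A=a,R=r,U=u))$ rather than $FT(X(U=u))$, which matches the theorem statement more transparently.
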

\noindent The main idea of the proof is to show that the $FT(\cdot)$ transformation is equivalent to the $do(A = 0, R = r)$ intervention and to use the counterfactual construction to show that the stronger fairness notion is satisfied. The full proof is given in Appendix \ref{appendix:theoremproof}.

\paragraph{Resolver-induced parity gap.} In the absence of resolving variable, the data adaptation of $X$ achieves the desired fairness criterion. Theorem \ref{thm:populationlevel} shows that $\widehat{Y} = f \circ FT$ satisfies the fairness notion \eqref{RCFF1} for any classifier $f$. However, in the presence of resolving variables, the setup is more subtle.
We look at a simple example of a  linear, additive regression model. The details of it are given in Table \ref{table:RIPGexample}.
\begin{table}
	\centering
	\begin{tabular}{>{\centering\arraybackslash} m{6cm} | c}
			Graphical model representation & Generating mechanism (SEM) \\[0.5ex]
			\hline
			\begin{tikzpicture}
		 [>=stealth, rv/.style={circle, draw, thick, minimum size=6mm}, rvc/.style={triangle, draw, thick, minimum size=10mm}, node distance=18mm]
		 \pgfsetarrows{latex-latex};
		 \begin{scope}
		 \node[rv] (1) at (-2,0) {$A$};
		 \node[rv] (2) at (0,1) {$X$};
		 \node[rv] (3) at (0,-1) {$R$};
		 \node[rv] (4) at (2,0) {$Y$};
		 \draw[->] (1) -- (2);
		 \draw[->] (1) -- (3);
		 \draw[->] (2) -- (4);
		 \node[] at (0,1.5) {};
		 \end{scope}
		 \end{tikzpicture} &
				$\begin{aligned}
				A &\gets \text{Bernoulli}(0.5) \\
				X &\gets \frac{1}{2}\mathbb{1}(A = 0) + \epsilon_X \\
				R &\gets \frac{3}{4}\mathbb{1}(A = 0) + \epsilon_R \\
				Y &\gets \frac{1}{2}X + \epsilon
				\end{aligned}$ \\
	\end{tabular}
	\caption{A full description of the example discussed in the text.}
	\label{table:RIPGexample}
\end{table}
Suppose we had data coming from this model. Assume that we want the variable $R$ to be resolving. Then our data adaptation would change the value of $X$ so that $$FT(X) = X - \frac{1}{2}\mathbb{1}(A = 0)$$
in order to remove the effect of $A$ from $X$. Suppose that, after this, we want to use the transformed values $FT(X)$ and $R$ to construct a predictor for $Y$. Since $Y$ can in fact be written as $$ Y = \frac{1}{2}X + \epsilon = \frac{1}{2}FT(X) + \frac{1}{4}\mathbb{1}(A = 0) + \epsilon$$ we can notice that $Y$ and $R$ are correlated. Furthermore, $Y \not\!\perp\!\!\!\perp R \mid FT(X)$. Therefore, if we linearly regress $Y$ onto $\lbrace FT(X),R\rbrace$ and obtain $\widehat{Y}$, then $R$ will have a non-zero coefficient. In fact, by using $R$, $\widehat{Y}$ will predict higher values for the $A = 0$ population. This is somewhat paradoxical, as $R$ has no causal effect on $Y$. The condition \eqref{RCFF1} will still hold, though. How do we reconcile this problem?

The example above illustrates that the condition \eqref{RCFF1} on its own is not sufficient to ensure the type of fairness we want. The difference between subpopulations should be at most the difference resulting from the causal effect of the resolvers. Therefore an additional strengthening of the criterion is necessary in the presence of resolving variables. We want to bound the maximum parity gap that can occur in the presence of the resolving variables.
\begin{definition}[Resolver-induced parity gap]	\label{def:RIPG}
	We say that a predictor $\widehat{Y}$ for $Y$ satisfies the resolver-induced parity gap with respect to a set of resolving variables $R$ if
	\begin{equation} \label{eq:RIPG}
	  \ex\big[\widehat{Y}(A = 0) - \widehat{Y}(A = 1)\big] \leq \ex\big[Y(A = 0) - Y(A = 0, R = R(1))\big].
	\end{equation}
\end{definition}
\noindent The LHS of the definition is the parity gap of our predictor $\widehat{Y}$. The RHS is the parity gap between the two groups explained only by the resolvers $R$. This is the maximum parity gap we wish to allow for $\widehat{Y}$. The main problem of the example above is, in some sense, that the labels $Y$ were not adapted jointly with $X$. Thus $Y$ still contained the effect of $A$. This effect was ultimately explained by $R$.

We argue that using the transformed labels $FT(Y)$ is one way to circumvent the problem. If we allow $\widehat{Y}$ to be a probability predictor (instead of a $\lbrace 0,1 \rbrace$ classifier), then $$f^\star(FT(X)) = \ex\big[FT(Y) \mid FT(X)\big]$$ satisfies
\begin{align*}
	\ex\big[f^\star(FT(X))\big] &= \ex\big[\ex[FT(Y) \mid FT(X)\big]\big] \\
	                    &= \ex\big[FT(Y)\big] \\
											&=\ex\big[Y(A = 0, R = R(a))\big],
\end{align*}
where the last equality comes from Theorem \ref{thm:populationlevel}. Therefore, $\widehat{Y} = f^\star \circ FT$ satisfies $$\ex\big[\widehat{Y}(A = a)\big] = \ex\big[Y(A = 0, R = R(a))\big]$$ from which it follows that this $\widehat{Y}$ satisfies the condition \eqref{eq:RIPG}. Finally, note that reasonable classifiers $f$ will converge to the population optimal prediction $f^\star$. For small sample sizes, the parity gap-condition is only fulfilled modulo sampling noise.   The reader might wonder why we work with probability predictions instead of $\lbrace 0, 1 \rbrace$ predictions. This is discussed in depth in Section \ref{methodformalisation}. A brief discussion related to the above argument is also given in Appendix \ref{appendix:whyprobability}.
\paragraph{Obtaining the (inverse) quantile function and latent quantiles.} In line \ref{invquant} of Algorithm \ref{algo:fairnesspopulation}, we obtain the inverse quantile function $g_V$ of $V$. More precisely, we first obtain the quantile function $Q_V(V ; \pa(V))$ using quantile regression forests \citep{qrf}, after which $g_V$ is obtained by inverting $Q_V(V ; \pa(V))$. The latent quantiles in line \ref{latentquant} are also obtained using quantile regression forests. Even though tree ensemble methods might perform worse in presence of heteroscedastic noise, we focus on this option because of its computational tractability. Alternative options for the quantile step are optimal transport methods, for instance \citep{qrot}.

\paragraph*{Discussion of the assumptions.}
The quantile preservation assumption (QPA) from Definition~\ref{def:qpa} is used for constructing a joint cross-world distribution. The assumption is equivalent to the equal noise assumption used in the NPSEM-IE framework of \cite{pearl2009}. This assumption has been much debated in the causal community in its different forms. See, for instance, \citep{dawid2000} or \citep{pearl2000}. The assumption is indeed not testable, not even in principle. We offer some thoughts on why QPA might be sensible in the fairness context:
\begin{enumerate}[(a)]
	\item If we consider two continuous distributions with cumulative functions $F_{X_0}$ and $F_{X_1}$ and $p \geq 1$, the Wasserstein distance $W_p(F_{X_0},F_{X_1})$ is minimised\footnote{We note that this minimisation is achieved uniquely in the case where $p > 1$, since the cost function is strictly convex.} by the optimal transport map $F_{X_1}^{-1} \circ F_{X_0} $, as shown by \cite{cuestaoptimal}. This mapping precisely represents quantile matching. We can see how QPA arises naturally as minimising the distance between counterfactual worlds.
	\item Consider two distributions which only differ by a shift in the mean, for example $N(\mu_1, \sigma^2)$ and $N(\mu_2,\sigma^2)$. In the null-case, when the means are the same, there is no difference between the subpopulations. When taking the limit to the null case, $\mu_2 \to \mu_1$, we recover the quantile matching property.
	\item The quantile preservation assumption ensures that we retain the original ordering of the values. Namely, if for a variable $V$ two individuals have equal values for all $\an(V)$, then QPA guarantees their counterfactual values $V(A=0)$ will retain the original ordering.
	\item In mathematical modelling, we often use noise to describe variations which are not explained by the data. This does not necessarily mean these are completely random, but could be a result of certain unobserved variables, which possibly cannot even be measured in practice. For instance, in the hypothetical intervention "What would have happened had this female been born a male?", there are a number of genetic, parenting and societal factors that would have remained the same.
	\item If we consider individuals with high quantiles $U$ (very successful individuals), it would be hard to argue how it could be fair that these quantiles do not stay the same in the counterfactual world.
	\end{enumerate}

\paragraph{Linear additive case.} The following theorem is intended to provide intuition about what the resolved fairness condition ensures in the simplest linear additive case.
\begin{theorem}[Strong resolved fairness for linear additive SEMs] \label{thm:linear}
Assume that we have an additive, linear structural equation model for variables $(A, X^{(1)}, ..., X^{(k)}, Y)$ and that $A \in \lbrace 0,1 \rbrace$ is a root node in the causal graph
\begin{align*}
	X^{(i)} \gets \sum_{V \in \pa_i} \beta^{(i)}_VV + \epsilon_i, \\
	Y \gets \sum_{V \in \pa({Y})} \beta_V^YV + \epsilon.
\end{align*}
The noise variables are assumed to be independent. Let $R$ be the set of the resolving variables. If $\widehat{Y} = \alpha_AA + \sum_{i = 1}^{k} \alpha_iX^{(i)}$ is a linear predictor for $Y$ then the strong resolved fairness condition \eqref{RCFF3} implies that
\begin{equation} \label{linearconstraint}
\sum_{j = 1}^{k} \alpha_j \times \ \left(\sum_{\substack{\text{paths } A \rightarrow X_j\\
                  \text{disjoint from } R}} \prod_{m \in \text{ path}} \beta_m \quad\right) - \alpha_A = 0.
\end{equation}
\end{theorem}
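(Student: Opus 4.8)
The plan is to spell out both sides of the strong resolved fairness condition \eqref{RCFF3} inside the linear additive model and simply read off the constraint. Condition \eqref{RCFF3} is a cross-world statement, so I first pass it through the quantile preservation assumption (Definition~\ref{def:qpa}): in an additive SEM the noise term $\epsilon_i$ is a fixed, parent-free monotone transform of the quantile $U^{(i)}$, so conditioning on $U=u$ is the same as freezing the entire noise vector at a value $\epsilon=\epsilon(u)$. Hence $\widehat Y(A=a,R=r,U=u)$ is the number obtained by setting $A\gets a$ and the resolvers $R\gets r$, holding the remaining noises at $\epsilon(u)$, propagating through the remaining structural equations, and finally evaluating $\widehat Y=\alpha_AA+\sum_i\alpha_iX^{(i)}$ with the intervened value $A=a$.

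The core of the argument is a path-tracing lemma for the mutilated model. Order $X^{(1)},\dots,X^{(k)}$ topologically, with $A$ first. For a resolving coordinate $X^{(j)}\in R$ one has $X^{(j)}(A=a,R=r,U=u)=r_j$, which does not involve $a$. For $X^{(j)}\notin R$ I would prove, by induction along the topological order, that $X^{(j)}(A=a,R=r,U=u)$ is affine in $a$ with slope $\gamma_j$, where $\gamma_j$ denotes the quantity in parentheses in \eqref{linearconstraint}, i.e. the sum over directed paths $A\to X_j$ containing no node of $R$ of the products of the edge coefficients along the path. In the inductive step one differentiates $X^{(j)}=\sum_{V\in\pa_j}\beta^{(j)}_VV+\epsilon_j$ in $a$: a parent equal to $A$ contributes the length-one path $A\to X_j$; a non-resolving parent $X^{(i)}$ contributes $\beta^{(j)}_{X^{(i)}}\gamma_i$, i.e. every $R$-avoiding path $A\to X_i$ prolonged by the edge $X^{(i)}\to X_j$; and a resolving parent contributes $0$, because $do(R=r)$ has frozen it — which is precisely why paths meeting $R$ are absent from $\gamma_j$. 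Reading $\gamma_j=0$ for $X^{(j)}\in R$ keeps the bookkeeping uniform, and the whole computation is Wright's path-tracing rule applied to the graph mutilated at $A$ and at $R$.

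Substituting back, the predictor becomes affine in $a$,
\[
\widehat Y(A=a,R=r,U=u) \;=\; s\cdot a \;+\; c\big(r,\epsilon(u)\big),
\]
where $c$ collects every term free of $a$ and the slope $s$ is built from the direct coefficient $\alpha_A$ together with the indirect, $R$-disjoint contributions $\sum_{j=1}^{k}\alpha_j\gamma_j$. Condition \eqref{RCFF3} requires this to coincide for $a$ and $a'$ at each fixed $r$ and $u$; since the map is affine in $a$ and $A$ already assumes the two distinct values $0$ and $1$, the slope must vanish, $s=0$, which written out is exactly the identity \eqref{linearconstraint}.

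I expect the main obstacle to be stating and proving the path-tracing lemma cleanly — above all, making rigorous that intervening on the resolver set $R$ annihilates the contribution of every directed $A\to X_j$ path that touches $R$, so that only the $R$-disjoint paths survive, and handling the degenerate cases of the induction (when $X^{(j)}\in R$, when $A\notin\pa_j$, or when $\pa_j$ consists only of resolvers and noise). Once that lemma is in place, collecting the terms linear in $a$ is routine.
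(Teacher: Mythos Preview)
Your proposal is correct and follows essentially the same route as the paper: freeze the noises via QPA, recursively expand $X^{(j)}(A=a,R=r)$ to obtain the path-tracing formula over $R$-disjoint directed paths, then substitute into $\widehat Y$ and set the $a$-dependence to zero. The paper's proof is terser (it writes the recursion as a one-line difference $X^{(i)}(A=1,R=r)-X^{(i)}(A=0,R=r)$ rather than framing it as an induction on a topological order), but the argument is identical in substance.
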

\begin{proof}
In the proof we suppress the notation $U = u$ to indicate that the quantiles are unchanged. Instead we say that condition \eqref{RCFF3} implies that all the noise variables $\epsilon_i$ remain unchanged under the $do(A = a, R = r)$ intervention. Hence we know that
\begin{align*}
	X^{(i)}(A = 1, R = r) - X^{(i)}(A = 0, R = r) = \\
	\sum_{V \in \pa_i} \beta_V^{(i)} (V(A = 1, R = r) - V(A = 0, R = r))
\end{align*}
By recursively expanding the sum on the RHS we obtain that
\begin{equation} \label{deltaX}
	X^{(i)}(A = 1, R = r) - X^{(i)}(A = 0, R = r) = \sum_{\substack{\text{paths } A \rightarrow X_j\\\text{disjoint from } R}} \prod_{k \in \text{ path}} \beta_k.
\end{equation}
Finally, condition \eqref{RCFF3} implies that
\begin{equation} \label{Yhat}
	\widehat{Y}(A = 1, R = r) - \widehat{Y}(A = 0, R = r) =  0.
\end{equation}
Expanding the expression \eqref{Yhat} using the identity \eqref{deltaX} gives precisely the constraint \eqref{linearconstraint}.
\end{proof}
\noindent Notice that resolved fairness is in the linear additive case equivalent to a single linear constraint on the coefficients of $\widehat{Y}$.

Lastly, we summarize the main advantages of our method:
\begin{enumerate}[(i)]
	\item it does not throw away information contained in $\de(A)$ which is potentially useful for prediction, as proposed in some previous works \citep{counterfactualfairness},
	\item it takes the causal perspective into account and offers interpretability of how and why fairness is achieved, ensuring that fairness criteria are not satisfied spuriously,
	\item it allows for a multitude of different fairness criteria, suitably adapted for different applications, reaching from demographic parity achieved when ${R} = \emptyset$, all the way to calibration which is often achieved when all variables are resolving, that is ${R} = {X}$.
\end{enumerate}

\section{Relation to existing work} \label{relation}
In this section we discuss the relation of fair adaptation to previous work on fairness.
\subsection{Observational notions of fairness}
For sake of brevity, we do not mention all the definitions of fairness proposed so far. We only review the most important observational notions. By \textit{observational notions} we refer to all notions that only focus on the observational distribution of the data, without taking the generating causal mechanism into account.
\begin{enumerate}[(i)]
	\item One of the first observational notions, called \textit{demographic parity}, goes all the way back to \citet{darlington1971}.
	\begin{definition}[Demographic parity] \label{dempar}
		A predictor $\widehat{Y}$ satisfies demographic parity if \begin{equation}\label{eq:dempar} \widehat{Y} \ci A.\end{equation}
	\end{definition}
	\noindent In the special context of binary predicted labels, $\widehat{Y}\in \{0,1\}$, demographic parity is equivalent to
	$
	\pr( \widehat{Y} = 1 \mid A = 0) = \pr(\widehat{Y} = 1 \mid A = 1)
	$. In words, this definition requires that our prediction is independent of the protected attribute. We now show that our population fairness criterion \eqref{RCFF1} is equivalent to demographic parity in the case when $A$ is a root node in the causal graph:
	\begin{proposition} \label{prop:demparity}
 Suppose that the protected attribute $A$ is a root node in the causal graph $\mathcal{G}$. If $\widehat{Y}$ is a binary predictor for the outcome $Y$, then we have that
	 \begin{equation}
		 \widehat{Y} \ci A \quad \iff \quad \widehat{Y}(A = a) \enskip\overset{d}{=} \enskip \widehat{Y} \enskip \forall a.
	 \end{equation}
 In words, if $A$ is a root node, then population fairness is equivalent to demographic parity.
 \end{proposition}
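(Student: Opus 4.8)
The plan is to collapse the interventional statement into a purely observational one. Section~\ref{causal} already records that when $A$ is a root of $\mathcal{G}$ the intervention $do(A=a)$ agrees with conditioning on $A=a$; once this is made precise for the derived quantity $\widehat{Y}$, the claimed equivalence is almost immediate.

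First I would justify
\[
\widehat{Y}(A=a)\ \overset{d}{=}\ \big(\widehat{Y}\mid A=a\big)\qquad\text{for every }a,
\]
via the truncated factorization (g-formula). Since $A$ has no parents, the observational joint density factors as $f(z)=f_A(a)\prod_{k:\,Z^{(k)}\neq A} f_k(z^{(k)}\mid z^{\pa_k})$, whereas the post-intervention density is $f(z^{\setminus A}\mid do(A=a))=\prod_{k:\,Z^{(k)}\neq A} f_k(z^{(k)}\mid z^{\pa_k})\big|_{A=a}$; these coincide with $f(z^{\setminus A}\mid A=a)$. Alternatively one invokes the second rule of do-calculus, as indicated in Section~\ref{causal}. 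Because $\widehat{Y}$ is a (measurable, possibly externally randomized) function of $Z=(A,X,Y)$, the identity transfers from $Z$ to $\widehat{Y}$.

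Given this, the population-fairness condition $\widehat{Y}(A=a)\overset{d}{=}\widehat{Y}$ for all $a$ is equivalent to $(\widehat{Y}\mid A=a)\overset{d}{=}\widehat{Y}$ for all $a$. For ($\Leftarrow$): $\widehat{Y}\ci A$ trivially gives $(\widehat{Y}\mid A=a)\overset{d}{=}\widehat{Y}$ for each $a$ in the support of $A$, hence $\widehat{Y}(A=a)\overset{d}{=}\widehat{Y}$. For ($\Rightarrow$): since $\widehat{Y}$ is binary, its conditional law is pinned down by $\pr(\widehat{Y}=1\mid A=a)$, and the hypothesis forces $\pr(\widehat{Y}=1\mid A=0)=\pr(\widehat{Y}=1\mid A=1)=\pr(\widehat{Y}=1)$; as $A$ is binary this exhausts its support, so the conditional distribution of $\widehat{Y}$ does not depend on $A$, i.e. $\widehat{Y}\ci A$.

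The only genuinely delicate point is the first step: one must check that $\widehat{Y}$ really is a function of the SEM variables (so the interventional/observational identity carries over), and that any external randomization used in forming $\widehat{Y}$ (relevant in the discrete case discussed later in the paper) is independent of $A$, so that it does not disturb either side. Everything afterwards is bookkeeping, with the binary assumption entering only to make ``$\overset{d}{=}$ for every $a$'' literally the independence statement $\widehat{Y}\ci A$.
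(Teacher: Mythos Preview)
Your proposal is correct and follows essentially the same route as the paper: both first reduce the interventional law to the conditional law via the second rule of do-calculus (you additionally spell out the truncated factorization), and then read off the equivalence between $(\widehat{Y}\mid A=a)\overset{d}{=}\widehat{Y}$ for all $a$ and $\widehat{Y}\ci A$. The extra care you take in noting that $\widehat{Y}$ must be a function of the SEM variables (with any external randomization independent of $A$) is a reasonable addition, though neither the paper nor your argument actually needs the binary assumption on $\widehat{Y}$---equality of all conditionals to the marginal already forces independence for arbitrary $\widehat{Y}$.
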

	\begin{proof}
 By applying the Action/Observation exchange rule (2nd rule of do-calculus), found in \citep{pearl2009} we have
	\begin{equation*} \label{rootnode}
		\widehat{Y}(A=a) \quad \overset{d}{=} \quad \widehat{Y} \mid A=a
	\end{equation*}
 Therefore, if $\widehat{Y}(A = a) \overset{d}{=} \widehat{Y} \enskip \forall a$, then for any $a,a'$,
 $$\widehat{Y} \mid A = a' \quad \overset{d}{=} \quad \widehat{Y}(A = a') \quad\overset{d}{=}\quad \widehat{Y}(A = a) \quad\overset{d}{=}\quad \widehat{Y} \mid A = a,$$
 implying demographic parity. The reverse implication works analogously.
 \end{proof}
	\item Another population definition of fairness is \textit{equality of odds}, first proposed by \citet{hardt2016}.
	\begin{definition}[Equality of odds] \label{EO}
		A predictor $\widehat{Y}$ satisfies equality of odds if
		\[ \widehat{Y} \ci A \mid Y.\]
	\end{definition} For  binary response $Y$ and prediction $\widehat{Y}$ (the original context in which it was proposed), equality of odds is equivalent to
	$
	\pr( \widehat{Y}=1 \mid Y = y,\  A = 0) = \pr(\widehat{Y}=1 \mid Y = y,\ A = 1)
	$ for $y \in \lbrace 0,1 \rbrace$. Only taking the equality above for $y = 1$ gives equality of opportunity. In words, this definition requires our prediction to be independent of the protected attribute, given the true outcome.
	\item The last observational notion we wish to mention is \textit{calibration}, recently discussed by \cite{chouldechova2017}. Here, it is assumed that $\widehat{Y}$ is an estimator of the true conditional  probability of $Y=1$. Calibration is defined as follows.
	\begin{definition}[Calibration] \label{suff}
		A prediction $\widehat{Y}$ satisfies calibration if \[ Y \ci A \mid \widehat{Y}.\]
	\end{definition}
For binary outcomes calibration is equivalent to
	$
	\pr( Y = 1 \mid \widehat{Y} = y,\  A = 0) = \pr( Y = 1 \mid \widehat{Y} = y,\ A = 1)
$ for $y \in [0,1]$.
	Calibration states that, given our prediction, the protected attribute should not provide us with additional information about the true outcome.
\end{enumerate}

\subsection{Adaptation and observational criteria} \label{relationobservational}
We discuss the relation between the observational criteria and our adaptation method. Consider the examples given in Table \ref{table:relationtable}, in which we consider a classifier $\widehat{Y}$ to be a function of the adapted data $FT({X})$. For understanding the examples, it suffices to think of a non-resolving variable as adapted to contain no effect of $A$. In the table we discuss the possibility of $Y$ being a resolving variable, which might seem confusing. By $Y$ being resolving we simply mean that the true outcome is considered fair as it is. In the given examples we consider $X$ to be a single feature, although the conclusions remain the same for multiple features.
\begin{table}\centering
	\caption{Examples that describe the intrinsic relation of observational criteria to counterfactual fairness.}\medskip\small
	\begin{tabular}{|M{2cm}|M{5cm}|M{5cm}|}
		\hline
		Example & Causal graph & Observational criterion achieved \\[1mm]\hline
		(a) & \begin{tikzpicture}
		[>=stealth, rv/.style={circle, draw, thick, minimum size=6mm}, rvc/.style={triangle, draw, thick, minimum size=7mm}, node distance=18mm]
		\pgfsetarrows{latex-latex};
		\begin{scope}
		\node[rv] (1) at (-2,-0.5) {$A$};
		\node[rv] (2) at (0,-0.5) {$X$};
		\node[rv] (3) at (2,-0.5) {$Y$};
		\draw[->] (1) -- (2);
		\draw[->] (2) -- (3);
		\end{scope}
		\node[] at (0,-1.25) {$X$ not resolving};
		\node[] at (0,0) {};
		\end{tikzpicture} & \Large $\widehat{Y} \ci A$\\\hline
		(b) & \begin{tikzpicture}
		[>=stealth, rv/.style={circle, draw, thick, minimum size=6mm}, rvc/.style={triangle, draw, thick, minimum size=7mm}, node distance=18mm]
		\pgfsetarrows{latex-latex};
		\begin{scope}
		\node[rv] (1) at (-2,-0.5) {$A$};
		\node[rv] (2) at (0,-0.5) {$Y$};
		\node[rv] (3) at (2,-0.5) {$X$};
		\draw[->] (1) -- (2);
		\draw[->] (2) -- (3);
		\end{scope}
		\node[] at (0,-1.25) {$Y$ not resolving};
		\node[] at (0,0) {};
		\end{tikzpicture} & \Large $\widehat{Y} \ci A$\\\hline
		(c) & \begin{tikzpicture}
		[>=stealth, rv/.style={circle, draw, thick, minimum size=6mm}, rvc/.style={triangle, draw, thick, minimum size=7mm}, node distance=18mm]
		\pgfsetarrows{latex-latex};
		\begin{scope}
		\node[rv] (1) at (-2,-0.5) {$A$};
		\node[rv] (2) at (0,-0.5) {$Y$};
		\node[rv] (3) at (2,-0.5) {$X$};
		\draw[->] (1) -- (2);
		\draw[->] (2) -- (3);
		\end{scope}
		\node[] at (0,-1.25) {$Y$ resolving};
		\node[] at (0,0) {};
		\end{tikzpicture} & \Large $\widehat{Y} \ci A \mid Y$\\\hline
		(d) & \begin{tikzpicture}
		[>=stealth, rv/.style={circle, draw, thick, minimum size=6mm}, rvc/.style={triangle, draw, thick, minimum size=7mm}, node distance=18mm]
		\pgfsetarrows{latex-latex};
		\begin{scope}
		\node[rv] (1) at (-2,-0.5) {$A$};
		\node[rv] (2) at (0,-0.5) {$X$};
		\node[rv] (3) at (2,-0.5) {$Y$};
		\draw[->] (1) -- (2);
		\draw[->] (2) -- (3);
		\end{scope}
		\node[] at (0,-1.25) {$X$ resolving};
		\node[] at (0,0) {};
		\end{tikzpicture} & \Large $Y \ci A \mid \widehat{S}$\\\hline
	\end{tabular}
	\label{table:relationtable}
\end{table}
We first provide a formal statement about Table \ref{table:relationtable}.
\begin{theorem}[Fairness criteria] \label{toytheorem}
Assume that for examples (a)-(c) from Table \ref{table:relationtable} we are building a classifier $\widehat{Y}$ based on appropriately adapted data $FT(X,Y)$ which satisfies the condition (\ref{RCFF1}) for the choice of resolvers $R$ given in the table. In example (d) we are building a predictor of the positive probability $S(x) = \pr(Y = 1 \mid X = x)$. For the given examples we have the following:
\begin{enumerate}
	\item [(a)] for $\widehat{Y}$ built based on $FT(X, Y)$ we have $\widehat{Y} \ci A$
	\item [(b)] for $\widehat{Y}$ built based on $FT(X, Y)$ we have $\widehat{Y} \ci A$
	\item [(c)] for $\widehat{Y}$ built based on $FT(X, Y) = (X,Y)$ we have $\widehat{Y} \ci A \mid Y$
	\item [(d)] under the additional assumption that our predictor $\widehat{S}$ built based on $FT(X, Y) = (X,Y)$ equals the true positive probability $S(x) = \pr(Y = 1 \mid X = x)$ we have that $Y \ci A \mid \widehat{S}$
\end{enumerate}
\end{theorem}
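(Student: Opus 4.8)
The plan is to treat the four examples one at a time, in each case first working out what the adaptation $FT$ does to the data for the prescribed resolver set, and then reading off the observational criterion from Theorem~\ref{thm:populationlevel}, Proposition~\ref{prop:demparity}, and the Markovian structure of the SEM. \textbf{Cases (a) and (b).} Here $X$ (respectively $Y$) is non-resolving, so $R=\emptyset$ and no descendant of $A$ is held fixed; this is exactly the special case discussed after Theorem~\ref{thm:populationlevel}. Specializing that theorem to $R=\emptyset$ gives population fairness $\widehat Y(A=a)\overset{d}{=}\widehat Y(A=a')$, and since $A$ is a root node Proposition~\ref{prop:demparity} upgrades this to $\widehat Y\ci A$. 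The one thing worth spelling out is that $\widehat Y$ is, at deployment, a function of $FT(X)$ alone --- the adapted response $FT(Y)$ only serves as the training target --- so the argument is insensitive to whether $X$ lies upstream or downstream of $Y$ in the graph, covering both (a) and (b).

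\textbf{Case (c).} Now $R=\{Y\}$ and the sole parent of $X$ is $Y$. Since resolvers are passed through unchanged (line~2 of Algorithm~\ref{algo:fairnesspopulation}), line~\ref{CFassign} applied to $X$ gives $FT(X)=g_X(U^{(X)},FT(Y))=g_X(U^{(X)},Y)=X$, so $FT(X,Y)=(X,Y)$ and $\widehat Y=f(X)$ is an arbitrary measurable function of $X$. Theorem~\ref{thm:populationlevel} (population resolved fairness with $R=\{Y\}$) gives $\widehat Y(A=a,Y=y)\overset{d}{=}\widehat Y(A=a',Y=y)$ for all $a,a',y$. The key step is an action/observation exchange that extends the do-calculus argument in the proof of Proposition~\ref{prop:demparity} from the root $A$ to the pair $(A,Y)$: writing $X=g_X(Y,U^{(X)})$ with $U^{(X)}$ independent of $(A,Y)$ by the Markovian assumption, the interventional law of $\widehat Y(A=a,Y=y)=f(g_X(y,U^{(X)}))$ and the conditional law of $\widehat Y\mid A=a,Y=y$ are both the pushforward of $U^{(X)}$ under $f(g_X(y,\cdot))$, hence $\widehat Y(A=a,Y=y)\overset{d}{=}\widehat Y\mid A=a,Y=y$. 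Chaining the two displays yields $\widehat Y\mid A=a,Y=y\overset{d}{=}\widehat Y\mid A=a',Y=y$ for all $a,a',y$, i.e.\ $\widehat Y\ci A\mid Y$.

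\textbf{Case (d).} Here $R=\{X\}$; $X$ is a resolver hence unchanged, and the only parent of $Y$ is $X$, so line~\ref{CFassign} gives $FT(Y)=g_Y(U^{(Y)},X)=Y$ and again $FT(X,Y)=(X,Y)$. By hypothesis the fitted predictor is the true positive probability, $\widehat S=S(X)=\pr(Y=1\mid X)$, so the content here is purely the calibration argument (Theorem~\ref{thm:populationlevel} is vacuous, $\widehat S$ being a deterministic function of $X$). From the graph $A\to X\to Y$ we have $Y\ci A\mid X$, i.e.\ $\pr(Y=1\mid A=a,X)=S(X)$; passing to the coarser conditioning generated by $\widehat S=S(X)$ via the tower property,
\[
\pr(Y=1\mid A=a,\widehat S=s)=\ex\big[\pr(Y=1\mid A=a,X)\mid A=a,\widehat S=s\big]=\ex\big[S(X)\mid A=a,\widehat S=s\big]=s,
\]
and the same computation without $A$ gives $\pr(Y=1\mid\widehat S=s)=s$; therefore $Y\ci A\mid\widehat S$, which is calibration.

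\textbf{Main obstacle.} Cases (a) and (b) are immediate corollaries, so the work is concentrated in (c) and (d). In (c) one must verify that intervening on the \emph{non-root} variable $Y$ coincides with conditioning on it in this graph --- which relies on $A$ being a root and on $X$ having no parent other than $Y$, so that no back-door path into $X$ is opened --- this is where a careful SEM-level rather than purely graphical argument is needed. In (d) the delicate point is that calibration survives the reduction from $X$ to the coarser statistic $S(X)$ \emph{only} because $S$ is the true conditional mean; the tower-property identity above would fail for a generic function of $X$. Everything else is bookkeeping with Algorithm~\ref{algo:fairnesspopulation} and appeals to Theorem~\ref{thm:populationlevel} and Proposition~\ref{prop:demparity}.
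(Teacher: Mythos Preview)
Your proof is correct, but it takes a more computational route than the paper in cases (c) and (d). In (c) the paper simply augments the graph by a node $\widehat Y$ with $X\to\widehat Y$ and observes that $Y$ d-separates $A$ from $\widehat Y$ in the chain $A\to Y\to X\to\widehat Y$; likewise in (d) it inserts the node $S$ to obtain $A\to X\to S\to Y$ and reads off $Y\ci A\mid S=\widehat S$ by d-separation. Your approach---routing (c) through Theorem~\ref{thm:populationlevel} followed by an explicit action/observation exchange at the SEM level, and handling (d) via the tower property applied to the coarsening $X\mapsto S(X)$---is longer but arguably more transparent about \emph{where} the assumptions enter: your (c) makes explicit that $U^{(X)}\ci(A,Y)$ is doing the work (so intervening on the non-root $Y$ matches conditioning), and your (d) isolates the exact step that fails if $\widehat S\neq S$, namely the identity $\ex[S(X)\mid A,\widehat S=s]=s$. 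The paper's d-separation argument is slicker but hides these dependencies inside the graphical calculus; in particular, in (d) the paper's graph surgery $A\to X\to S\to Y$ tacitly uses that $Y$ depends on $X$ only through $S(X)$, which is exactly the content of your tower computation. One small redundancy on your side: in (c), once you have established $FT(X,Y)=(X,Y)$ and $\widehat Y=f(X)$, the detour through Theorem~\ref{thm:populationlevel} is unnecessary---you could go straight from $X\ci A\mid Y$ (d-separation) to $\widehat Y\ci A\mid Y$.
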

\begin{proof}
Consider the following:
\begin{enumerate}
	\item [(a,b)]  In both examples the values of $X$ and $Y$ are transformed to $FT(X, Y)$ which are independent of $A$, by condition (\ref{RCFF1}). Any predictor $\widehat{Y}$ which is a function of $FT(X)$ must also be independent of $A$.
	\item [(c)] Consider the following graph where the predictor $\widehat{Y}$ is included in the causal graph
	 \begin{figure}[H] \centering
		\begin{tikzpicture}
		[>=stealth, rv/.style={circle, draw, thick, minimum size=5mm}, rvc/.style={triangle, draw, thick, minimum size=5mm}, node distance=15mm]
		\pgfsetarrows{latex-latex};
		\begin{scope}
		\node[rv] (1) at (-3,0) {$A$};
		\node[rv] (2) at (-1,0) {$Y$};
		\node[rv] (3) at (1,0) {$X$};
		\node[rv] (4) at (3,0) {$\widehat{Y}$};
		\draw[->] (1) -- (2);
		\draw[->] (2) -- (3);
		\draw[->] (3) -- (4);
		\end{scope}
		\end{tikzpicture}
	\end{figure}
	Clearly we have that $Y$ d-separates $A$ and $\widehat{Y}$ and the conclusion follows.
	\item [(d)] In this example we can view the causal representation to be expanded as follows:
\begin{figure}[H] \centering
	\begin{tikzpicture}
	[>=stealth, rv/.style={circle, draw, thick, minimum size=5mm}, rvc/.style={triangle, draw, thick, minimum size=5mm}, node distance=15mm]
	\pgfsetarrows{latex-latex};
	\begin{scope}
	\node[rv] (1) at (-3,0) {$A$};
	\node[rv] (2) at (-1,0) {$X$};
	\node[rv] (3) at (1,0) {$S$};
	\node[rv] (4) at (3,0) {$Y$};
	\draw[->] (1) -- (2);
	\draw[->] (2) -- (3);
	\draw[->] (3) -- (4);
	\end{scope}
	\end{tikzpicture}
\end{figure}
where $S(x) = \pr(Y = 1 \mid X = x)$ is the true positive probability. Under the additional assumption $\widehat{S} = S$, we have that $\widehat{S}$ d-separates $A$ and $Y$.
\end{enumerate}
\end{proof}
We can now clarify the core ideas of observational notions discussed in this section. Note that in the toy examples from Table \ref{table:relationtable} we have that:
\begin{enumerate}
	\item [(a,b)] Demographic parity is achieved when $X$ or $Y$ are considered to be non-resolving. We can see that in some sense demographic parity is a criterion that requires us to treat all subpopulations as exactly the same, regardless of what is observed in the data.
	\item [(c)] Equality of odds is achieved when $Y$ is considered to be resolving. In this case the adaptation procedure does not change the values of $X,Y$. The idea that the true outcomes $Y$ are fair is in the heart of this notion.
	\item [(d)] Calibration can be\footnote{It might be valuable to note that calibration does not necessarily have to arise in this case. The criterion is still dependent on how we build our classifier.} achieved when $X$ is considered to be resolving. In this case our adaptation procedure does not change the values of $X,Y$. Calibration is a criterion that ensures we do not discriminate any subpopulation beyond the differences observed in the data. Calibration should often come as a result of a good maximum utility predictor.
\end{enumerate}

\subsection{Mediation and path-specific effects} \label{pathspecmethods}
It is important to mention the connection of our work with some previous works \citep{shpitser2018, pathspecific}. In particular, \cite{shpitser2018} start with the joint distribution $p(X,Y)$ and define discrimination as $\phi(p(X,Y))$, where $\phi$ is some functional of the distribution. After that, their goal is to find another distribution $p^\star(X,Y)$ which is close to the original $p(X,Y)$ and satisfies $|\phi(p^\star(X,Y))| \leq \epsilon$. One possible choice of $\phi$ they work with is the  natural direct effect (NDE) which is defined as \[ NDE = E\big[  Y(A = a, {R} = R(a')) - Y(A=a')    \big]  .\]
The NDE can be interpreted as the total causal effect of the protected attribute on the outcome that does not go through resolving variables. If we use the transformed distribution $FT(X,Y)$ as the $p^\star(X,Y)$, then we can relate their approach to our method via the following proposition
\begin{proposition} \label{prop:NDE}
	Suppose that condition~\eqref{RCFF1} holds, that is $$\widehat{Y}(A = a, {R} = {r}) \quad \stackrel{d}{=}\quad  \widehat{Y}(A = a', {R} = {r})\;\; \forall r.$$
	Then it follows that \[E\big[  \widehat{Y}(A = a, {R} = R(a')) - \widehat{Y}(A=a')    \big] = 0  .\]
\end{proposition}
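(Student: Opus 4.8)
The plan is to reduce the claimed identity to condition \eqref{RCFF1} by exploiting the fact that $A$ is a root node of $\mathcal{G}$, so that intervening on $A$ coincides with conditioning on $A$. First I would rewrite the two terms on the left-hand side in a way that makes condition \eqref{RCFF1} directly applicable. The first term $\widehat{Y}(A=a, R=R(a'))$ is a nested counterfactual: we set $A=a$ but feed in the value the resolvers would take under $A=a'$. Using the law of total expectation over the distribution of $R(a')$, write
\begin{equation*}
\ex\big[\widehat{Y}(A=a, R=R(a'))\big] = \ex_{r \sim R(a')}\Big[\ex\big[\widehat{Y}(A=a, R=r)\big]\Big].
\end{equation*}
Similarly, since $R$ is a descendant of $A$, the plain counterfactual $\widehat{Y}(A=a')$ can be decomposed along the resolvers as
\begin{equation*}
\ex\big[\widehat{Y}(A=a')\big] = \ex_{r \sim R(a')}\Big[\ex\big[\widehat{Y}(A=a', R=r)\big]\Big],
\end{equation*}
because under $\text{do}(A=a')$ the resolvers $R$ are distributed exactly as $R(a')$, and further intervening to set $R$ to its own naturally-occurring value is vacuous (this is the consistency property of counterfactuals). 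The key point is that both expectations are now averages, over the \emph{same} mixing distribution $R(a')$, of the quantities $\ex[\widehat{Y}(A=a, R=r)]$ and $\ex[\widehat{Y}(A=a', R=r)]$ respectively.

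Next I would apply condition \eqref{RCFF1}: for every fixed $r$, $\widehat{Y}(A=a, R=r) \stackrel{d}{=} \widehat{Y}(A=a', R=r)$, hence in particular they have equal expectations, $\ex[\widehat{Y}(A=a, R=r)] = \ex[\widehat{Y}(A=a', R=r)]$ for all $r$. Plugging this into the two displays above shows the two averages coincide, so their difference is zero, which is exactly the assertion of the proposition. Note that the use of \eqref{RCFF1} is the only place the fair adaptation enters; everything else is bookkeeping with the counterfactual notation.

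The main obstacle is making the decomposition of $\ex[\widehat{Y}(A=a')]$ and $\ex[\widehat{Y}(A=a, R=R(a'))]$ rigorous, i.e.\ justifying that we may condition on the value of $R(a')$ and then treat the inner object as the ``$\text{do}(R=r)$'' counterfactual. This requires the consistency/composition axiom for the nested counterfactuals together with the fact that $A$ is a root (so $\text{do}(A=a')$ genuinely induces the distribution of $R(a')$ on the resolvers, with the remaining mechanism for $\widehat{Y}$ unchanged); one should also be a little careful that $\widehat{Y}$ is a deterministic function of $FT(X)$ and hence of the intervened quantities, so these manipulations are well-defined. Once the decomposition is in place, the argument is immediate from \eqref{RCFF1}.
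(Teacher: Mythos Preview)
Your proposal is correct and follows essentially the same route as the paper's proof: both use \eqref{RCFF1} to replace $\widehat{Y}(A=a,R=\cdot)$ by $\widehat{Y}(A=a',R=\cdot)$ and then invoke consistency to identify $\widehat{Y}(A=a',R=R(a'))$ with $\widehat{Y}(A=a')$. The paper is simply more terse, asserting the distributional equality $\widehat{Y}(A=a,R=R(a')) \stackrel{d}{=} \widehat{Y}(A=a',R=R(a'))$ directly rather than unpacking it via the law of total expectation over $R(a')$ as you do.
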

\noindent The short proof is given in Appendix \ref{appendix:NDE}. The proposition shows that condition \eqref{RCFF1}, which our method achieves, is sufficient (but not necessary) for the NDE to vanish for the transformed distribution.

\section{Practical aspects and extensions} \label{methodformalisation}
After explaining the main ideas and fairness criteria our method achieves, we turn to discussing the practical aspects and extensions of our method.

\subsection{Categorical (and discrete) variables} \label{categorical}
An important practical aspect of our method is dealing with variables that take values on a discrete domain. There is an immediate problem we encounter in this case. If we think about the mapping $u \rightarrow V(U=u)$, we can see that different values of $u$ can correspond to the same value of $V(U = u) = v$, that is the mapping is no longer injective (as opposed to the continuous case). To summarize, the conditional distribution $U \mid V = v$ is deterministic in the continuous case, and non-deterministic in the discrete case.

\paragraph{Ordered categorical and discrete variables.} As a starting point, we describe our method for a binary variable $V \in \lbrace 0,1 \rbrace$. Consider the probabilities $p_0 := \pr(V = 0 \mid \pa(V),\ A = 0)$ and $p_0' := \pr(V = 0 \mid \pa(V),\ A = 1)$. Assume without losing generality that $V=0$. Then we compute the transformed value $FT(V)$ as:
\begin{itemize}
	\item if $p_0' \leq p_0$ then $FT(V)= 0$
	\item if $p_0' > p_0$ then
	\begin{equation*}
	\setstretch{1.5}
	FT(V) = \begin{cases}
	0 \quad \text{with probability } \frac{p_0}{p_0'}\\
	1 \quad \text{with probability } \frac{p_0'-p_0}{p_0'}\\
	\end{cases}
	\end{equation*}
\end{itemize}
We need to generalise this approach to non-binary, discrete variables $V$. Suppose now that $V$ takes values in $\lbrace 1,...,m \rbrace$. Similarly as above, define ${p} = (p_1,...,p_m)$ and $ {p}' = (p'_1,...,p'_m)$ where:
\begin{align}
		p_i &:= \pr(V = i \mid \pa(V),\ A = 0) \label{eq:ctfdist1} \\
	p'_i &:= \pr(V = i \mid \pa(V),\ A = 1 ) \label{eq:ctfdist2}
\end{align}
These probabilities can, for example, be estimated using probability random forests \citep{malley2012probability}. Motivated by the quantile matching assumption, which arises as a solution that induces minimal change in the counterfactual world, we want to find a joint density for ${p},\ {p}'$ that minimises some transport cost. This can be done by solving the following optimisation problem:
\begin{equation}
	\begin{gathered}
	\min_{\Pi \in \mathbbm{R}^{m \times m}}\quad \text{Tr}(\Pi C) \label{eq:opt}\\
	\begin{aligned}
	\textup{s.t.}\quad \sum_{j=1}^{m} \Pi_{ij}  &=  p_i & \forall i \in \lbrace 1,...,m \rbrace\\
	\sum_{i =1}^{m} \Pi_{ij} & =  p'_j & \forall j \in \lbrace 1,...,m \rbrace\\
	\end{aligned}
	\end{gathered}
\end{equation}
where the cost matrix $C$ has entries $C_{ij} = |i-j|^p$. The exact value of $p$ does not really matter, since any $p > 1$ will give the same (unique) solution. When $V = i$, we sample $FT(V)$ from the distribution given by $\widehat{\Pi}_i$, the $i^\text{th}$ row of the optimal transport matrix. In particular $\widehat{\Pi}_i$ needs to be normalised, and we let $F_{\widehat{\Pi}_i}$ be the corresponding cumulative distribution function. We then have
\begin{equation} \label{eq:discretesampling}
	FT(V) = F^{-1}_{\widehat{\Pi}_i}(U), \text{ where } U \sim U[0,1]
\end{equation}
Notice that $FT(V)$ is not necessarily deterministic. It can happen that $\widehat{\Pi}_i$ has multiple non-zero entries, meaning that the value $V=i$ is coupled with multiple counterfactual outcomes. The reason for this was already mentioned, namely the fact that the conditional distribution $U \mid V = v$ is non-deterministic in the discrete case.

\paragraph{Unordered categorical variables.}  Let $V$ be categorical and unordered. We first obtain an ordering for it. Suppose $V$ takes values $C_1,...,C_l$. We then find a bijection $\sigma: \lbrace C_1,...,C_l \rbrace \rightarrow \lbrace 1,...,l \rbrace$ such that
\begin{equation} \label{marginallyincreasing}
	\sigma(C_i) \leq \sigma(C_j) \implies \pr(Y = 1 \mid V = C_i,\ A = 0) \leq \pr(Y=1 \mid V = C_j,\ A = 0)
\end{equation}
Then simply define $V' = \sigma(V)$ and use it as a replacement for $V$. Note that the condition (\ref{marginallyincreasing}) implies that the marginal probability $\pr(Y = 1 \mid V' = v,\ A = 0)$ is increasing in $v$. Implicitly, we assume that the same holds for $A = 1$. That is, we assume that $\pr(Y = 1 \mid V' = v,\ A = 1)$ is also increasing in $v$. Then we can again apply the approach used for discrete variables.

If there is no meaningful ordering, or we have reason to believe that imposing an ordering does not make sense, a slightly different approach is needed. We define ${p},\ {p}'$ the same way as above, with $p_i = \pr(V = C_i \mid \pa(V),\ A = 0)$ and $p'_i := \pr(V = C_i \mid \pa(V), A = 1)$. We again solve the optimisation problem (\ref{eq:opt}), but with a different cost matrix $C$, namely $C_{ij} = \mathbb{1}(i \neq j)$. When $V = C_i$, the distribution of $FT(V)$ is given by the (appropriately normalized) $i^\text{th}$ column of $\widehat{\Pi}$.

\subsection{Inherent limitation of the discrete case} \label{ctot}
In Section \ref{populationlevel} we gave an optimal transport interpretation of the quantile preservation assumption (QPA). In particular, we state that for two random variables $X,Y$ with distribution functions $F_X,F_Y$ the Wasserstein distance $W_p(X,Y)$ is minimised by matching the quantiles, that is using the optimal transport map given by $F^{-1}_Y \circ F_X$. This map is the optimal transport map for every $p \geq 1$ and also a unique optimal transport map for $p > 1$, since the cost function then becomes strictly convex.

The quantile matching is the \textit{greedy solution}. Note that this approach also extends to the discrete case - a greedy solution\footnote{A reader familiar with optimal transport will recognize that here we are talking about solutions satisfying the $c$-monotonicity property.} is optimal whenever the cost function is strictly convex \citep[chap.~2]{santambrogio2015}. However, there is a major difference between the continuous and the discrete case.

In the continuous case, using the quantile preservation assumption, we are able to compute the counterfactual values exactly. Richness of the ambient space allows for the optimal transport map to be deterministic, whereas in the discrete setting this is never the case. The solution of the problem (\ref{eq:opt}) gives us a non-deterministic distribution over the counterfactual outcomes. The reason for this is that it is impossible to distinguish individuals which have the same value of a variable $V$ - in some sense, the information coming from the quantiles is compressed.

A possible solution which first comes to mind is to perhaps take the expectation over this randomness. But even if we consider the simplest example, we run into a problem. Consider using a single binary predictor $X \in \lbrace 0,1\rbrace$ distributed as $\pr(X = 1 \mid A = 0) = 0.5$ and $\pr(X = 1 \mid A = 1) = 0.4$. Suppose that the outcome $Y$ simply equals $X$. After solving the optimal transport problem, all individuals with $A = 1, X = 0$ would have the counterfactual distribution
\begin{equation*}
	\pr(X(A=0) = 1 \mid A = 1, X = 0) = 1 - \pr(X(A=0) = 0 \mid A = 1, X = 0) = \frac{1}{6}
\end{equation*}
and all other individuals would retain the values they have. But when taking the expectation over this randomness, we have that
\begin{equation*}
	\ex[ X(A=0) \mid A= 1, X = 0] = \frac{1}{6}
\end{equation*}
meaning we get no additional information to distinguish between individuals with $A = 1, X = 0$. To treat everyone equally, we would have to either assign everyone $X = 1$ or $X = 0$, neither of which options is desirable. Therefore, we use \textit{randomisation}, which in this case chooses a "lucky" 1/6 of the individuals with $A=1, X =0$ and sets their counterfactual values $X(A=0)$ to $1$. For some regression applications integrating outcomes over different counterfactual worlds is meaningful. In that case, taking expectation over the assignment randomness might be sensible. For classification, where labels are either $0$ or $1$, this might fail, as shown above.

Further, consider two variables $X_1 \sim N(0,1)$ and $X_2 = \mathbb{1}(X_1 > 0)$. If we only have the variable $X_2$ available, then it is impossible to distinguish between individuals that have $X_2 = 1$. However, if we use $X_1$ instead, then no two individuals will be the same - we will always be able to distinguish them. When going from $X_1$ to $X_2$, we see that \textit{quantiles are compressed} and they can only be determined up to an interval. This causes the counterfactual value to be non-deterministic.

Finally, we clarify the difference in approach for different types of variables taking values on discrete domains. There are two cases we consider:

\begin{enumerate}[(a)]
	\item Discrete and ordered categorical variables:
		\subitem We solve the optimisation problem (\ref{eq:opt}) with the cost matrix $C_{ij} = |i-j|^p$ corresponding to $\ell_p$-loss. The cost matrix reflects the fact that, due to an inherent ordering of the values, we wish to penalise larger changes more. For any $p>1$ the greedy solution is the unique optimal solution.
	\item Unordered categorical variables:
		\subitem We also solve the optimisation problem (\ref{eq:opt}), but with the cost matrix $C_{ij} = \mathbb{1}(i \neq j)$. This cost matrix corresponds to $\ell_0$-loss. Since in this case we do not have an inherent ordering structure of the values, we penalise all changes equally. The optimal solution in this case is not unique.
\end{enumerate}

\subsection{Sample-level adaptation} \label{procedure}
Let $\mathcal{G}$ be the causal graph and let $R$ be a choice of resolving variables. Further, let $f(V \mid \pa(V))$ be the density corresponding to variable $V$. Let $g(\pa(V),\ U^{(V)})$ represent the inverse quantile function of the distribution of $V$, that is $V = g(\pa(V),\ U^{(V)})$. Sample level fair adaptation is given in Algorithm \ref{algo:fairnessadaptation}.
\begin{algorithm}
	\setstretch{1.05}
	\DontPrintSemicolon 
	\KwIn{Data $(A_k,{X}_k,Y_k)_{k=1:n}$, causal graph $\mathcal{G}$, choice of resolving variables $R$.}
	\KwOut{Adapted data $FT(A_k,\ X_k,\ Y_k)_{k=1:n}$ }
	\For{$V \in \de(A) \setminus {R}$ \text{ in topological order}}{
	\uIf{$V$ continuous}{
		estimate the quantiles $(\widehat{U}^{(V)}_k)_{k=1:n}$ of $V$ in the distribution $f(V \mid \pa(V))$ using quantile regression on the data $(V_k,\ \pa(V_k))_{k=1:n}$\; \label{algo:compquant}
		using $(V_k,\ \pa(V_k),\widehat{U}^{(V)}_k)_{k=1:n}$ obtain an estimator $\widehat{g}(\pa(V),\ U^{(V)})$ of $g(\pa(V),\ U^{(V)})$
	}
	\Else{
		\Case{\textnormal{\textbf{1.}} $V$ discrete and $V\neq Y$}{
			estimate the probability distributions $\widehat{{p}}(\pa(V_k))_{k=1:n}$ as in Equations \eqref{eq:ctfdist1}-\eqref{eq:ctfdist2}\;
			obtain the transformed probability distributions $\widehat{{p}}(FT(\pa(V_k)))_{k=1:n}$ \;
			$\forall k$ solve the optimal transport problem (\ref{eq:opt}) between $\widehat{{p}}(\pa(V_k))$ and $\widehat{{p}}(FT(\pa(V_k)))$ with $\ell_p$-loss to get $(\widehat{\Pi}^k)_{k=1:n}$
		}
		\Case{\textnormal{\textbf{2.}} $V = Y$}{
			perfrom \textbf{case 1.} restricted to the training set
		}
	}
		\For{all $k$ with $A_k = 1$ and $V_k$ known}{
			\uIf{$V$ continuous}{
					$FT(V_k) \gets \widehat{g}(FT(\pa(V_k)),\ \widehat{U}^{(V)}_k)$
									}
				\Else{
					$FT(V_k) \gets$ sample from the distribution $\widehat{\Pi}^k_{V_k}$ as in Equation (\ref{eq:discretesampling})
				}
		}
  }
	\Return{$FT(A_k,\ X_k,\ Y_k)_{k=1:n}$}
	\caption{{\sc Fairness Adaptation}}
	\label{algo:fairnessadaptation}
\end{algorithm}
Notice that our procedure treats the response $Y$ separately. The only reason for this is that $Y$ is unavailable on the test set.
The quantile regression step can be done either using random forests \citep{qrf} or by using an optimal transport approach \citep{qrot}.

\subsection{About the training step}
To construct a useful predictor $\widehat{Y}$ we must make use of the labels $Y$. The adapted labels $FT(Y)$ are available as an output of our fair adaptation procedure. The \textit{resolver-induced parity gap} was introduced in Definition \ref{def:RIPG} after convincing ourselves that condition \eqref{RCFF1} is not sufficient on its own (in the presence of resolving variables).

It is important to mention that, in the case of resolving variables, data from different counterfactual worlds should not be used. For instance, one should not use the original labels $Y$ with the transformed covariates $FT(X)$. If doing so, the parity gap condition \eqref{eq:RIPG} might not be satisfied. We refer the reader back to example given in Section \ref{populationlevel} and Definition \ref{def:RIPG}. It is also not advisable to leave out variables in the training procedure. We discuss two options for the training step. These are:
\begin{enumerate}[(A)]
	\setstretch{1}
	\item train the classifier on the original data $(A_k,\ {X}_k,\ Y_k)_{k=1:n}^\text{train}$ \label{methodone}
	\item train with the adapted data and the adapted labels $FT(A_k,\ X_k,\ Y_k)_{k=1:n}^\text{train}$\label{methodtwo}
\end{enumerate}
For both methods above, the adapted test data $FT(A_k,\ X_k,\ Y_k)_{k=1:n}^\text{test}$ should be used to produce the predictions for the test set. We have shown in Section \ref{populationlevel} that method \eqref{methodtwo} is sensible for satisfying the condition \eqref{eq:RIPG}. Similar reasoning can be used for method \eqref{methodone}, that is if $f^\star$ is such that $$f^\star(X) = \ex \big[Y \mid X \big]$$ then for $\widehat{Y} = f^\star \circ FT$ we have that
\begin{align*}
	\ex \big[\widehat{Y}(A = 0, R = R(a)) \big] &= \ex \big[f^\star(X(A = 0, R = R(a)))\big]\\
																		&= \ex\big[\ex\big[Y(A = 0, R = R(a)) \mid X(A = 0, R = R(a))\big]\big]\\
																		&= \ex\big[Y(A = 0, R = R(a))\big]
\end{align*}
from which it follows that this $\widehat{Y}$ also satisfies the condition \eqref{eq:RIPG}.
The better of the two options can be chosen via cross-validation as the one with the best fairness-accuracy trade-off. For experimental results in Section \ref{experimental} we by default use training method \eqref{methodtwo}.

\subsection{Method extensions}
There are two methodological extensions of our approach that we briefly discuss, leaving out some of the detail.
\paragraph*{Is there really a baseline?} So far we have considered the subpopulation $A=0$ to be the baseline. This choice is somewhat arbitrary. We briefly comment on the implications of choosing a baseline.

Firstly, the choice of the baseline can influence the optimism of our predictor. Imagine that we are trying to predict recidivism on parole, with race being the protected attribute. If we adapt the data using the white subpopulation as baseline, then our predictor will be much more optimistic, meaning that it will predict fewer recidivism outcomes than it would have in the case of choosing the black subpopulation as the baseline.

Secondly, if we have discrete variables, then our procedure will include some randomisation. There is no randomisation for the baseline population, but there is for the rest. If the baseline is the advantaged group, then randomisation can serve as \textit{positive discrimination} and might be seen as acceptable. However, we might want to consider an approach in which both subpopulations are randomised equally. We briefly discuss how we might split the burden of randomisation between the subpopulations.

\paragraph{A non-baseline approach.} We previously discussed adapting the data to the $A = 0$ baseline using Algorithm \ref{algo:fairnessadaptation}, which gives us the pre-processed version of the data, which we here label $\widetilde{{X}}^{A = 0}$. Of course, the same procedure can be applied to obtain the version corresponding to the $A =1$ baseline, which we label $\widetilde{{X}}^{A = 1}$. Then we can use the following approach:
\begin{enumerate}
	\item Obtain $(\widetilde{{X}}^{A = 0}, \widetilde{Y}^{A=0})$ and $(\widetilde{{X}}^{A = 1}, \widetilde{Y}^{A=1})$ using Algorithm \ref{algo:fairnessadaptation}.
	\item Concatenate the two versions to obtain
	\begin{equation*}
			{X}^\star = (\widetilde{{X}}^{A = 0},\widetilde{{X}}^{A = 1}).
		\end{equation*}
	\item Build predictors $\widehat{\pi}^{A=0}({x}^\star),\widehat{\pi}^{A=1}({x}^\star)$ that estimate the probabilities $\pr(\widetilde{Y}^{A=0} = 1 \mid {X}^\star = {x}^\star)$, $\pr(\widetilde{Y}^{A=1} = 1 \mid {X}^\star = {x}^\star)$ respectively.
	\item For any test observation with ${X}^{\star}_{\text{test}} = {x}^{\star}_{\text{test}}$ return the predicted probability of
	$$ \widehat{\pi}({x}^{\star}_{\text{test}}) = \frac{\widehat{\pi}^{A=0}({x}^{\star}_{\text{test}})+\widehat{\pi}^{A=1}({x}^{\star}_{\text{test}})}{2}.$$
\end{enumerate}
We offer an interpretation of the approach above. First we combine the information from the two worlds in which $A = 0$ and $A = 1$. We then use the joint information to predict probabilities of positive outcomes in both of these worlds. In the final step, we combine the probabilities from the two worlds by simply taking the mean probability. In this way, we obtain probability estimates for positive outcomes, which can then be used to construct a classifier by thresholding.

\paragraph{Edge specific extension.}
We quickly mention another possible extension of our method. So far we discussed resolving variables, on which the effect of $A$ is deemed fair. Sometimes deciding if a variable is resolving might not be straightforward. For example, we might see the causal effect of several causal parents as being fair, but of several as being unfair. In some sense, our method so far was focused on the nodes in the causal graph $\mathcal{G}$. An extension of this, which focuses more on specific edges in $\mathcal{G}$ is possible. A short discussion of this idea, motivated by an example, is given in Appendix \ref{appendix:edgeextension}.

\section{Experimental results} \label{experimental}
An implementation of our method, which uses tree ensembles for the quantile learning step, is available as the \texttt{fairadapt} package on CRAN. Our experimental results consist of two parts. In the first part, we look at synthetic examples which demonstrate how our methodology offers flexibility compared to possibly prohibitively strong demographic parity. In the second part, we look at the method performance on two real world datasets, comparing it to several different baseline methods.

\subsection{Measures of fairness and performance}
Before displaying the experimental results, we discuss all the measures that are used for assessment of our classifiers. For measuring performance, we report on accuracy in the simple classification tasks. It is, however, sometimes desirable to work with probability predictions, in which case we report the area under the receiver operator characteristic (AUC).
\paragraph{Fairness measures.}
There are two fairness measures we use. To assess demographic parity, we use the \textit{parity gap}, defined as $\pr(\widehat{Y} = 1 \mid A = 0) - \pr(\widehat{Y} = 1 \mid A = 1)$. When dealing with probability predictions, we simply report the parity gap at the 0.5 threshold.

In order to assess calibration, we need a measure for it. We introduce the \textit{$k$-level calibration score}. Suppose we have the predicted positive probabilities $\pr(\widehat{Y}=1 \mid {X} = {x})$ and the true labels $Y$.
We start by splitting the individuals with $A = 0$ into $k$ groups, based on the predicted probability of $\pr(\widehat{Y} = 1 \mid {X} = {x})$. In particular, if $\pr(\widehat{Y}=1 \mid {X} = {x}) \in [\frac{i}{k}, \frac{i+1}{k})$, then the individual is assigned to group $G_i$. In each group we compute the mean of the true outcomes $Y$ for that group, $\ex[Y \mid G_i]$, which is simply the proportion of positive outcomes in the group $G_i$. Assume that the vector $\pmb{c}^{A=0}$ contains these proportions for each group. We compute $\pmb{c}^{A=1}$ for the $A = 1$ population in the same way. Then the $k$-level calibration score is defined as $$\frac{1}{k}||\pmb{c}^{A=0}-\pmb{c}^{A=1}||_1 .$$ Note that for a well-calibrated score, this measure should be small. If calibration is satisfied, as $k \to \infty$, the $k$-level calibration score tends to 0.

\subsection{From parity to calibration}
Earlier in the text we claimed that our method can offer fairness criteria which are between demographic parity and calibration. Namely, Theorem \ref{toytheorem} shows that in a simple case demographic parity is achieved when none of the variables are resolving, and that calibration can be achieved if all of the variables are resolving. Depending on the choice of the resolving variables, we can interpolate between these two notions of fairness. Roughly speaking, the larger the resolving set is, the larger the effect of $A$ is in the data. In that case, the predictor $\widehat{Y}$ is closer to the unconstrained $\widehat{Y}^{\text{max-util}}$ predictor, meaning that we are closer to satisfying calibration. The smaller the resolving set is, the smaller the effect of $A$ is, meaning that we are closer to demographic parity.
\begin{table}
	\centering
	\begin{tabular}{c| c}
			Synthetic A & Synthetic B \\[0.5ex]
			\hline
			$\begin{aligned}
				A &\gets \text{Bernoulli}(0.5) \\
				X_i &\gets -\frac{A}{4} + \frac{1}{8} + \epsilon_i \quad \text{for } i \in \lbrace 1,...,5 \rbrace \\
				Y &\gets \text{Bernoulli}(\text{expit}(\sum_{i=1}^5 X_i))
				\end{aligned}$ &
				$\begin{aligned}
				A &\gets \text{Bernoulli}(0.5) \\
				X_i &\gets -\frac{A}{4} + \frac{1}{8} + \epsilon_i \quad \text{for } i \in \lbrace 1,2 \rbrace \\
				X_3 &\gets \frac{1}{4}X_2 + \epsilon_3 \\
				Y &\gets \text{Bernoulli}(\text{expit}(\sum_{i=1}^3 X_i))
				\end{aligned}$ \\
	\end{tabular}
	\caption{Structural equation models for the two synthetic examples A and B. All the noise variables $\epsilon_i$ are independent and expit$(x) = \frac{e^x}{1+e^x}$.}
	\label{table:SEMtable}
\end{table}

We demonstrate this by looking at two synthetic examples, with their structural equation models given in Table \ref{table:SEMtable}.
All the noise terms $\epsilon_i$ are independent $N(0,1)$ variables and $\text{expit}(x) = \frac{e^x}{1+e^x}$. In words, $Y$ follows a logistic regression model based on the $X_i$'s. The causal graphs of the two synthetic examples are given in Figure \ref{fig:gapcalib}.
\begin{figure} \centering
	\begin{tikzpicture}
	[>=stealth, rv/.style={circle, draw, thick, minimum size= 5mm}, rvc/.style={triangle, draw, thick, minimum size=5mm}, node distance=20mm]
	\pgfsetarrows{latex-latex};
	\begin{scope}
	\node[rv] (0) at (-2.5,0) {$A$};
	\node[rv] (1) at (-1.5,1) {$X_1$};
	\node[rv] (2) at (0,1) {$X_2$};
	\node[rv] (3) at (1.5,1) {$X_3$};
	\node[rv] (4) at (-1,-1) {$X_4$};
	\node[rv] (5) at (1,-1) {$X_5$};
	\node[rv] (6) at (2.5,0) {$Y$};
	\draw[->] (0) -- (1);
	\draw[->] (0) edge[bend right = 10] (2);
	\draw[->] (0) edge[bend right = 20] (3);
	\draw[->] (0) -- (4);
	\draw[->] (0) edge[bend left = 10] (5);
	\draw[->] (2) -- (6);
	\draw[->] (3) -- (6);
	\draw[->] (4) edge[bend left = 10] (6);
	\draw[->] (5) -- (6);
	\draw[->] (1) edge[bend right = 20] (6);
	\node[] at (0,-2) {(A)};
	\end{scope}
	\end{tikzpicture}
	\qquad
	\begin{tikzpicture}
	[>=stealth, rv/.style={circle, draw, thick, minimum size= 5mm}, rvc/.style={triangle, draw, thick, minimum size=5mm}, node distance=20mm]
	\pgfsetarrows{latex-latex};
	\begin{scope}
	\node[rv] (0) at (-2,0) {$A$};
	\node[rv] (1) at (-1,1) {$X_2$};
	\node[rv] (2) at (0,-1) {$X_1$};
	\node[rv] (3) at (1,1) {$X_3$};
	\node[rv] (6) at (2,0) {$Y$};
	\draw[->] (0) -- (1);
	\draw[->] (0) -- (2);
	\draw[->] (1) -- (3);
	\draw[->] (2) -- (6);
	\draw[->] (3) --  (6);
	\draw[->] (1) edge[bend right = 20] (6);
	\node[] at (0,-2) {(B)};
	\end{scope}
	\end{tikzpicture}
	\caption{A graphical model representation of the SEMs used for Synthetic examples A and B.}
	\label{fig:gapcalib}
\end{figure}
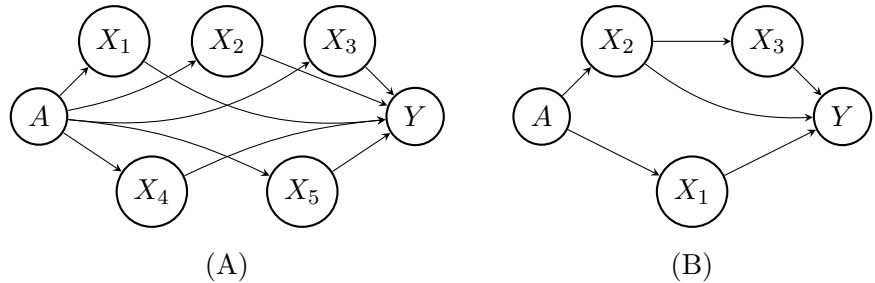
In both examples, we analyse the AUC-parity gap and parity gap-calibration score trade-offs via the resolving variables. Two baseline methods were implemented, to compare our results. These are \textit{reweighing} \citep{kamiran2012data} and \textit{fair reductions} \citep{agarwal2018reductions}. Both of these methods aim to achieve demographic parity. Therefore, we should compare them to our method with no resolving variables. More details on comparison methods are given shortly in Section \ref{realdata}. The fair reductions approach performs poorly on both of these task (for a range of parameter values $\epsilon$ of the method), so it is not included in the final analysis of the results.

In example A we enlarge the set of resolving variables stepwise, including $X_i$ at step $i$. For example B, we try out all possible subsets of resolving variables. We run our method ten times, with 5000 training and test samples generated from the given SEMs. A logistic regression classifier is used after applying \texttt{fairadapt}. On each repeat we measure the AUC, parity gap and the calibration score. The results are shown in Figures \ref{fig:ExampleA} and \ref{fig:ExampleB}. Vertical error bars in the figures represent the standard deviations of respective measures obtained from the ten repeats.
\begin{figure}
    \centering
    \begin{minipage}{0.45\textwidth}
        \centering
        \includegraphics[width=1.1\textwidth]{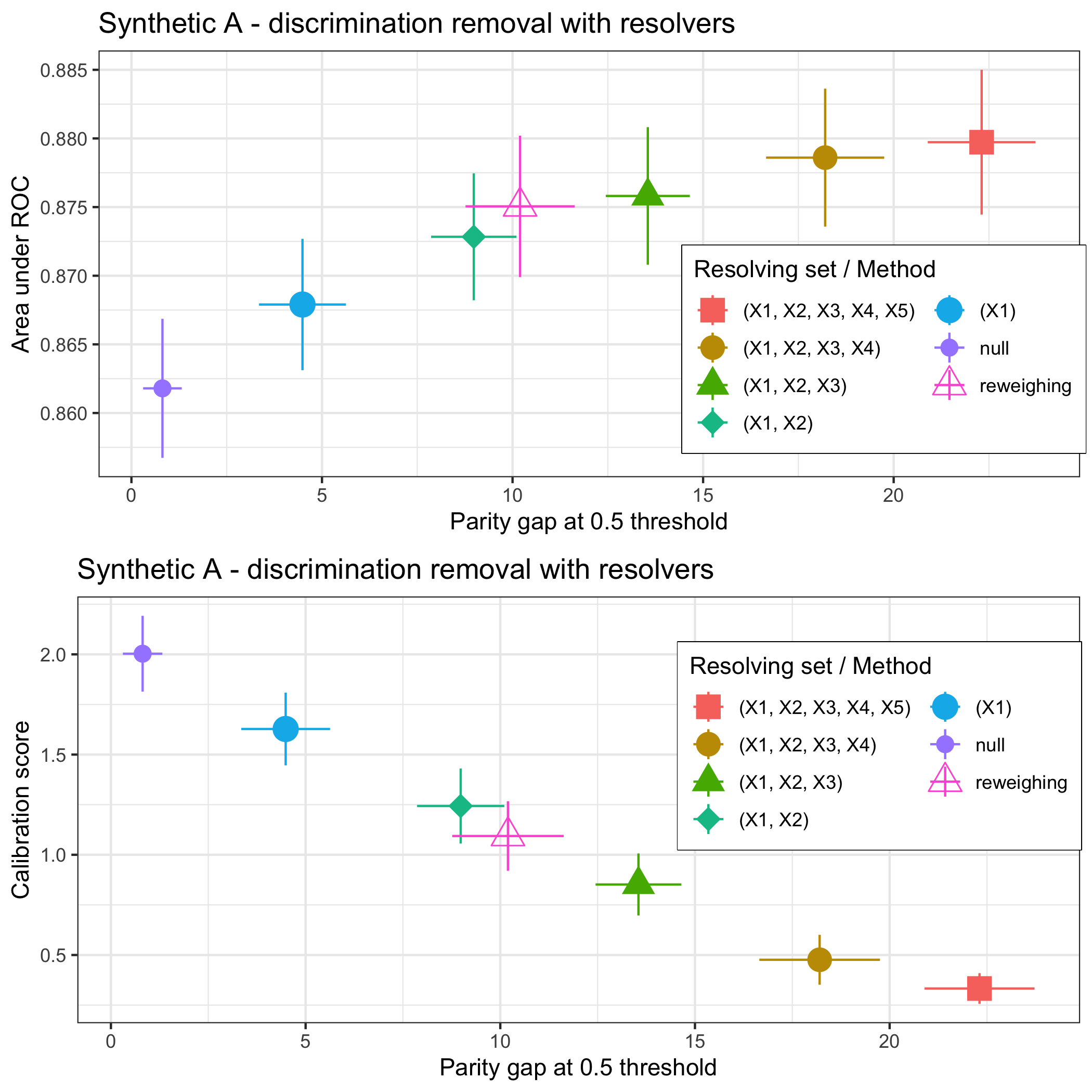} 
        \caption{AUC-parity and parity-calibration score trade-off for example A. Vertical bars represent standard deviations obtained from 10 repeats.}
				\label{fig:ExampleA}
    \end{minipage}\hfill
    \begin{minipage}{0.45\textwidth}
        \centering
        \includegraphics[width=1.1\textwidth]{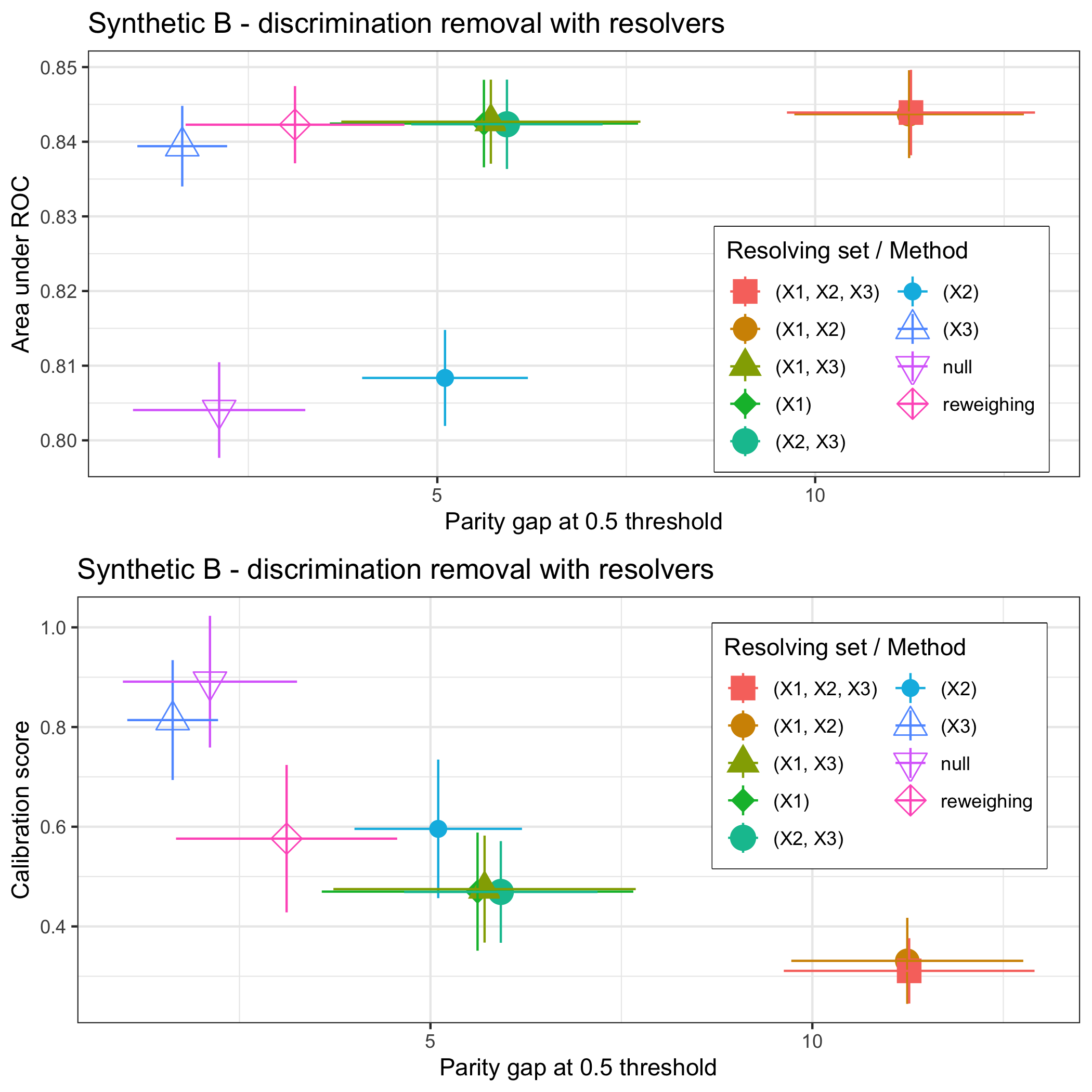} 
        \caption{AUC-parity and parity-calibration score trade-off for example B. Vertical bars represent standard deviations obtained from 10 repeats.}
				\label{fig:ExampleB}
    \end{minipage}
\end{figure}

\paragraph{Example A.} In Figure \ref{fig:ExampleA} the AUC and the parity gap are increasing, whereas the calibration score is becoming smaller, as we enlarge the resolving set. Here we can see the trade-off between demographic parity and calibration via the resolving variables. The baseline method of reweighing obtains better accuracy, but fails to eliminate discrimination fully.

\paragraph{Example B.} Figure \ref{fig:ExampleB} shows similar behaviour, in a slightly more complicated causal situation. For example, note that setting $X_1$ to be resolving implicitly sets $X_3$ to be resolving, too. Setting $R = \lbrace X_1, X_2 \rbrace$ has almost the same performance as setting $R = \lbrace X_1, X_2, X_3 \rbrace$. Similary, resolving sets $\lbrace X_2, X_3 \rbrace$ and $\lbrace X_2 \rbrace$ show very similar results, as expected.

\subsection{Real data experiments} \label{realdata}
We next look at real data experiments. We summarize all the baseline methods against which we benchmark our results. In the real data comparisons, we only consider the case of demographic parity (meaning no resolving variables) as other comparisons methods are designed to achieve precisely this notion.

\paragraph{Baseline methods.}
The comparison methods that we look at are:
\begin{itemize}
	 \item standard implementaion of random forests \citep{wright2015ranger}, serving as a fairness-ignorant baseline
	 \item \textit{fairness through unawareness} - RF applied to the data after excluding the protected attribute
	 \item the \textit{reweighing} preprocessing method \citep{kamiran2012data}, which learns specific weights for the combinations of the class label and the protected attribute which are then used for building a classifier (in this case we use the logistic regression classifier and the implementation from the IBM toolkit \citep{ibmtoolkit})
	 \item the \textit{reductions} approach \citep{agarwal2018reductions} casts the fairness problem in a linear programming (LP) form in order to find a sample-weighted classifier which satisfies the desired fairness constraint (we again use logistic regression for our classifier that allows sample-weighting and vary the fairness constraint violation parameter $\epsilon \in \lbrace 0.1, 0.01, 0.001 \rbrace$)
\end{itemize}
\paragraph*{UCI Adult.}
The Adult dataset from the UCI machine learning repository \citep{lichman2013uci} contains information on 48842 individuals and the outcome to be predicted is whether an individual has a yearly income of more than 50 thousand dollars. The data comprises of the following features\footnote{The original dataset contains a few more features, but we focus on those that have been used in previous fairness applications.}:
\begin{itemize}
	\item gender, labelled $A$, which we consider to be the protected attribute
	\item demographic information $C$ - including age, race and nationality
	\item marital status $M$ and years of education $L$
	\item work related information $R$ - job occupation, hours of work per week and work class
	\item a binary outcome  $Y$ representing whether a person's income exceeds 50000 dollars a year
\end{itemize}
The UCI Adult dataset has been previously analyzed as an application of different fairness procedures, for instance in \citep{shpitser2018} and \citep{pathspecific}. The proposed causal graph for the dataset is presented in Figure \ref{fig:causalgraphs}(a).
\begin{figure} \centering
	\begin{tikzpicture}
	[>=stealth, rv/.style={circle, draw, thick, minimum size=7mm}, rvc/.style={triangle, draw, thick, minimum size=8mm}, node distance=7mm]
	\pgfsetarrows{latex-latex};
	\begin{scope}
	\node[rv] (c) at (2,2) {$\pmb{C}$};
	\node[rv] (a) at (-2,2) {$A$};
	\node[rv] (m) at (-3,0) {$M$};
	\node[rv] (l) at (-1,0) {$L$};
	\node[rv] (r) at (1,0) {${R}$};
	\node[rv] (y) at (3,0) {$Y$};
	\draw[->] (c) -- (m);
	\draw[->] (c) -- (l);
	\draw[->] (c) -- (r);
	\draw[->] (c) -- (y);
	\draw[->] (a) -- (m);
	\draw[->] (m) -- (l);
	\draw[->] (l) -- (r);
	\draw[->] (r) -- (y);
	\path[->] (a) edge[bend left = 0] (l);
	\path[->] (a) edge[bend left = 0] (r);
	\path[->] (a) edge[bend left = 0] (y);
	\path[->] (m) edge[bend right = 20] (r);
	\path[->] (m) edge[bend right = 30] (y);
	\path[->] (r) edge[bend right = 20] (y);
	\path[->,red,dashed] (a) edge[bend left = 10] node[above left] {(?)} (c);
	\node[] at (0,-2) {(a)};
	\end{scope}
	\end{tikzpicture}
	\qquad
	\begin{tikzpicture}
	[>=stealth, rv/.style={circle, draw, thick, minimum size=7mm}, rvc/.style={triangle, draw, thick, minimum size=8mm}, node distance=7mm]
	\pgfsetarrows{latex-latex};
	\begin{scope}
	\node[rv] (c) at (2,2) {$\pmb{C}$};
	\node[rv] (a) at (-2,2) {$A$};
	\node[rv] (m) at (-3,0) {$\pmb{J}$};
	\node[rv] (l) at (-1,0) {$P$};
	\node[rv] (r) at (1,0) {${D}$};
	\node[rv] (y) at (3,0) {$Y$};
	\draw[->] (c) -- (m);
	\draw[->] (c) -- (l);
	\draw[->] (c) -- (r);
	\draw[->] (c) -- (y);
	\draw[->] (a) -- (m);
	\draw[->] (m) -- (l);
	\draw[->] (l) -- (r);
	\draw[->] (r) -- (y);
	\path[->] (a) edge[bend left = 0] (l);
	\path[->] (a) edge[bend left = 0] (r);
	\path[->] (a) edge[bend left = 0] (y);
	\path[->] (m) edge[bend right = 20] (r);
	\path[->] (m) edge[bend right = 30] (y);
	\path[->] (r) edge[bend right = 20] (y);
	\node[] at (0,-2) {(b)};
	\end{scope}
	\end{tikzpicture}
	\caption{(a) the causal graph (black edges) claimed to correspond to the UCI Adult dataset. The additional red, dashed edge corresponds to a sampling bias in the data; (b) causal graph of the COMPAS dataset.}
	\label{fig:causalgraphs}
\end{figure}
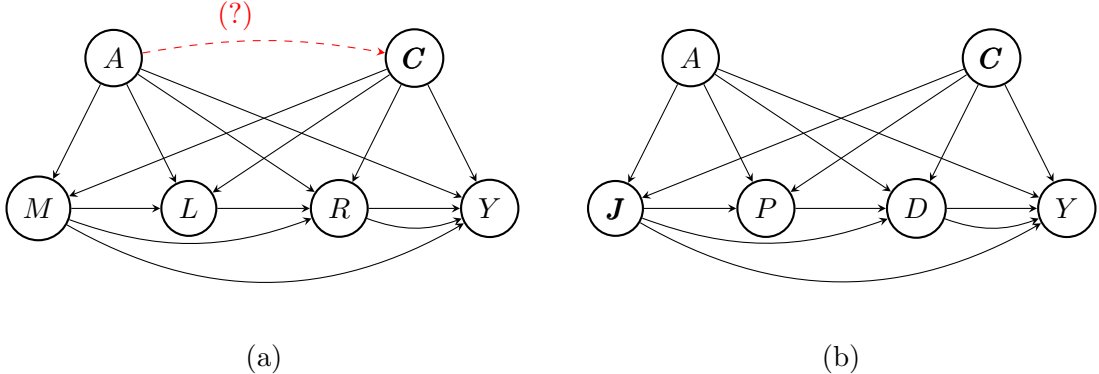
While we do agree that this causal graph makes sense intuitively, care needs to be taken because the sampling bias can induce dependencies that have no explanation in reality. This is precisely the case with the UCI Adult dataset, which we can observe by inspecting the relation between two features in $\pmb{C}$ (age and race) and the protected attribute $A$. From the plots in Figure \ref{fig:UCIplots} we see that in the dataset gender is not independent of age and race, as the causal graph would imply. To solve the problem, we subsample the dataset in order to mitigate the sampling bias. Details about how we pre-processed the dataset are given in Appendix A.
\begin{figure}
	\centering
	\includegraphics[height=60mm,angle=0]{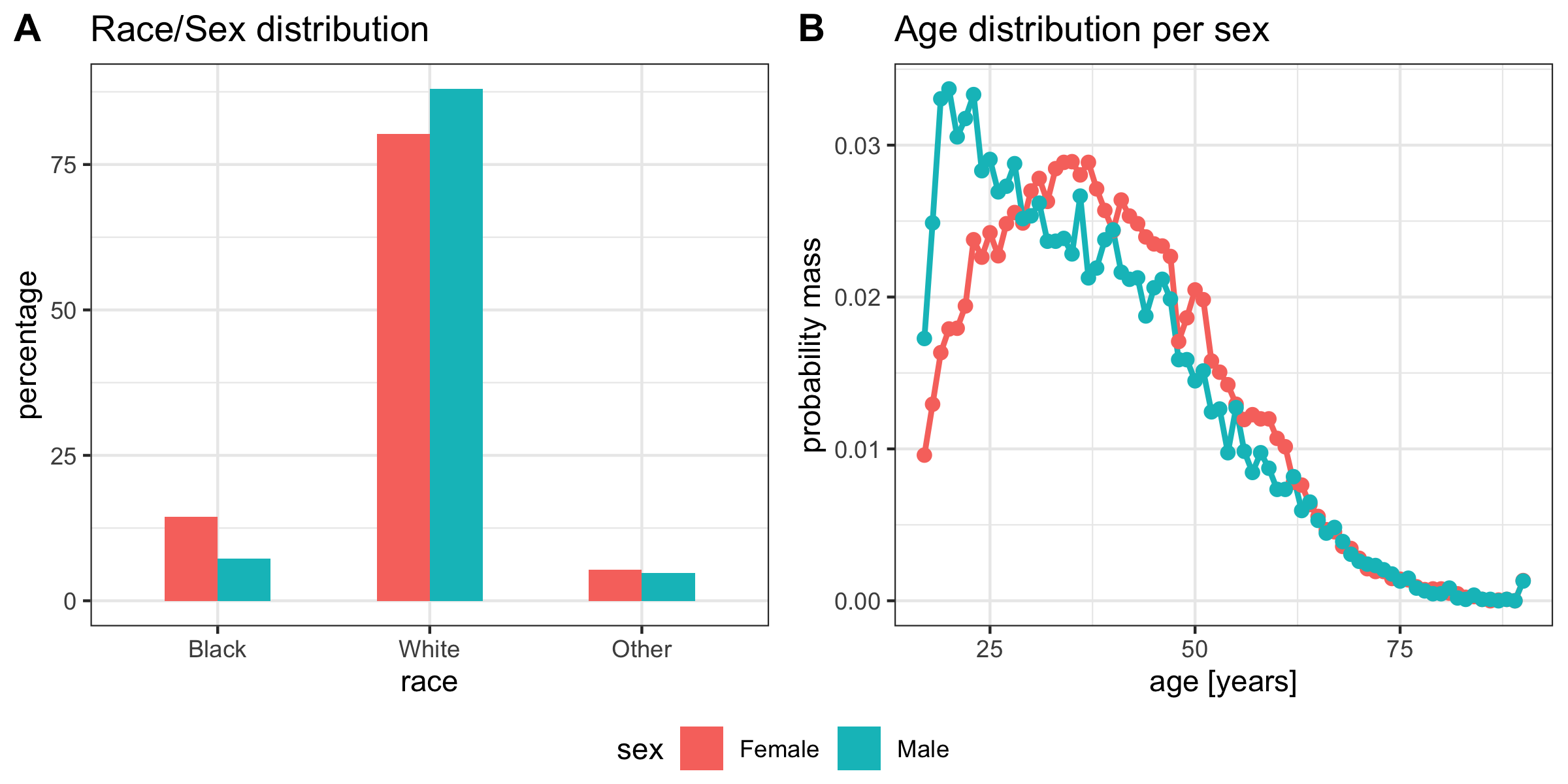}
	\caption{Plots of influence of sex on race (\textbf{A}) and age (\textbf{B}) in the UCI Adult dataset.}
	\label{fig:UCIplots}
\end{figure}

\paragraph*{COMPAS dataset.} The second real dataset we analyse is the COMPAS dataset \citep{ProPublica} which contains the following features:
\begin{itemize}
	\item outcome $Y$ is recidivism whilst on parole within a two year period
	\item protected attribute $A$ in this case is race (White vs. Non-White)
	\item demographic information $\pmb{C}$
	\item juvenile offense counts $\pmb{J}$, count of prior offenses $P$ and degree of the charge $D$
\end{itemize}

\noindent The causal graph that we propose is given in Figure \ref{fig:causalgraphs}(b). The reader here might disagree that this example falls into the class of Markovian non-parametric models.

We select several individuals in the dataset which are non-white, male and of age 30. We look at their values of juvenile counts and prior counts $(J_1, J_2, J_3, P)$ before and after applying \texttt{fairadapt}
\begin{verbatim}
> compas[id, offense.count]
          juv_fel_count juv_misd_count juv_other_count priors_count
241              0              0               0            4
646              0              0               0            8
807              0              0               0           17
1425             2              0               0           20
1470             1              0               2           15
> adapted_compas[id, offense.count]
          juv_fel_count juv_misd_count juv_other_count priors_count
241              0              0               0            3
646              0              0               0            5
807              0              0               0           13
1425             0              0               0           11
1470             0              0               2            9
\end{verbatim}
We can notice that fair adaptation reduces the number of offenses for these individuals, since in the dataset the baseline population (white) has fewer offenses on average.  Notice how the transformed values could be used in an interpretable way. Hypothetical statements like "if you were white, your juvenile offense counts would have been $J_1, J_2, J_3$, in turn resulting in prior count of $P$, resulting in prediction $\widehat{Y}$" now become possible. This part of our method, however, rests on the assumption from Definition \ref{def:qpa}.
\paragraph*{Results. } For both UCI Adult and COMPAS, we split the dataset into 75\% training and 25\% testing randomly 20 times. Each time, we apply all the baseline methods and our \texttt{fairadapt} method, measuring accuracy and the parity gap each classifier achieves. Figures \ref{fig:adult} and \ref{fig:compas} summarize the obtained results.
\begin{figure}
    \centering
    \begin{minipage}{0.45\textwidth}
        \centering
        \includegraphics[width=1.1\textwidth]{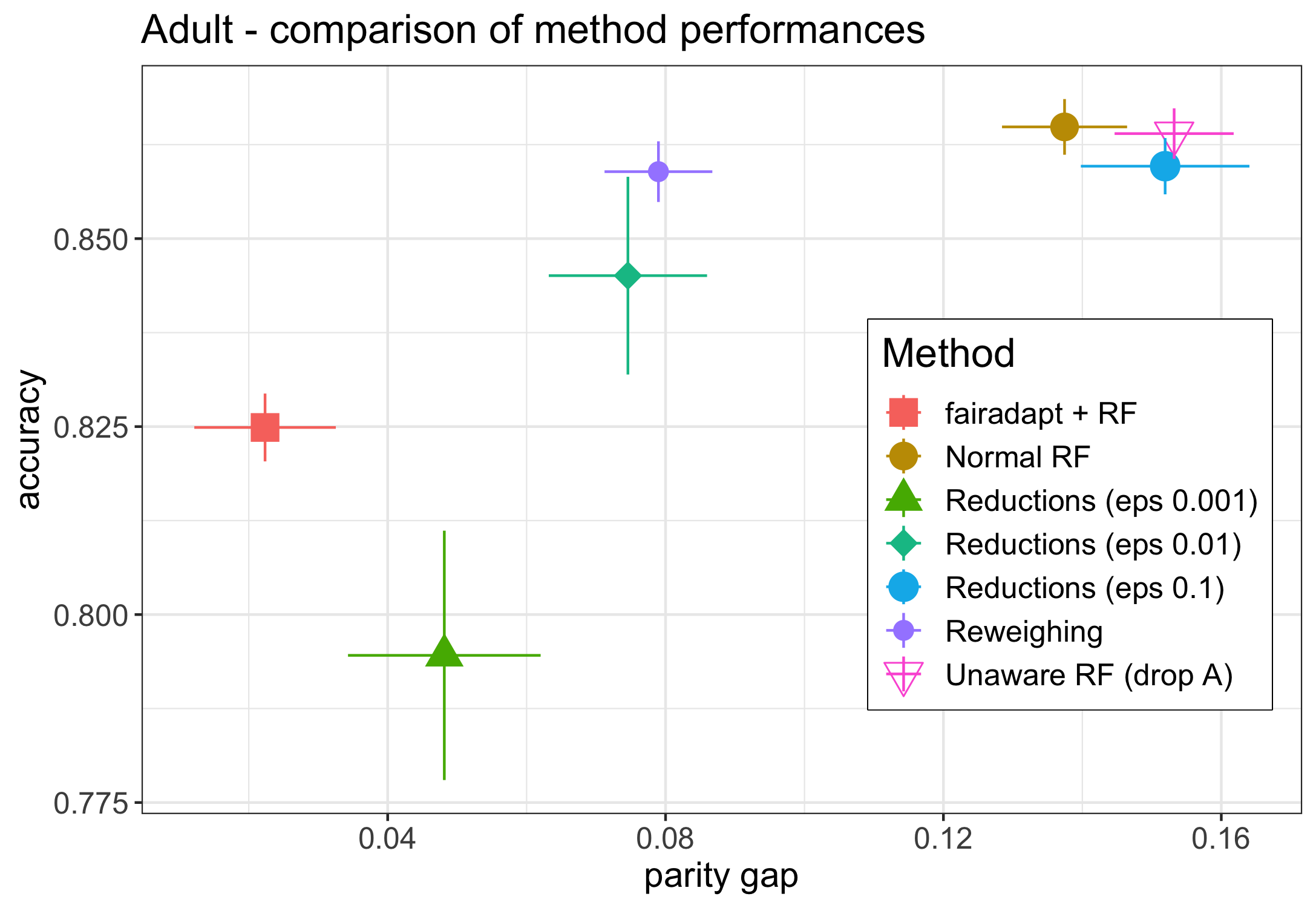} 
        \caption{Comparison of the performance of different fairness methods on the UCI Adult dataset. Vertical bars represent standard deviations obtained from 20 repeats.}
				\label{fig:adult}
    \end{minipage}\hfill
    \begin{minipage}{0.45\textwidth}
        \centering
        \includegraphics[width=1.1\textwidth]{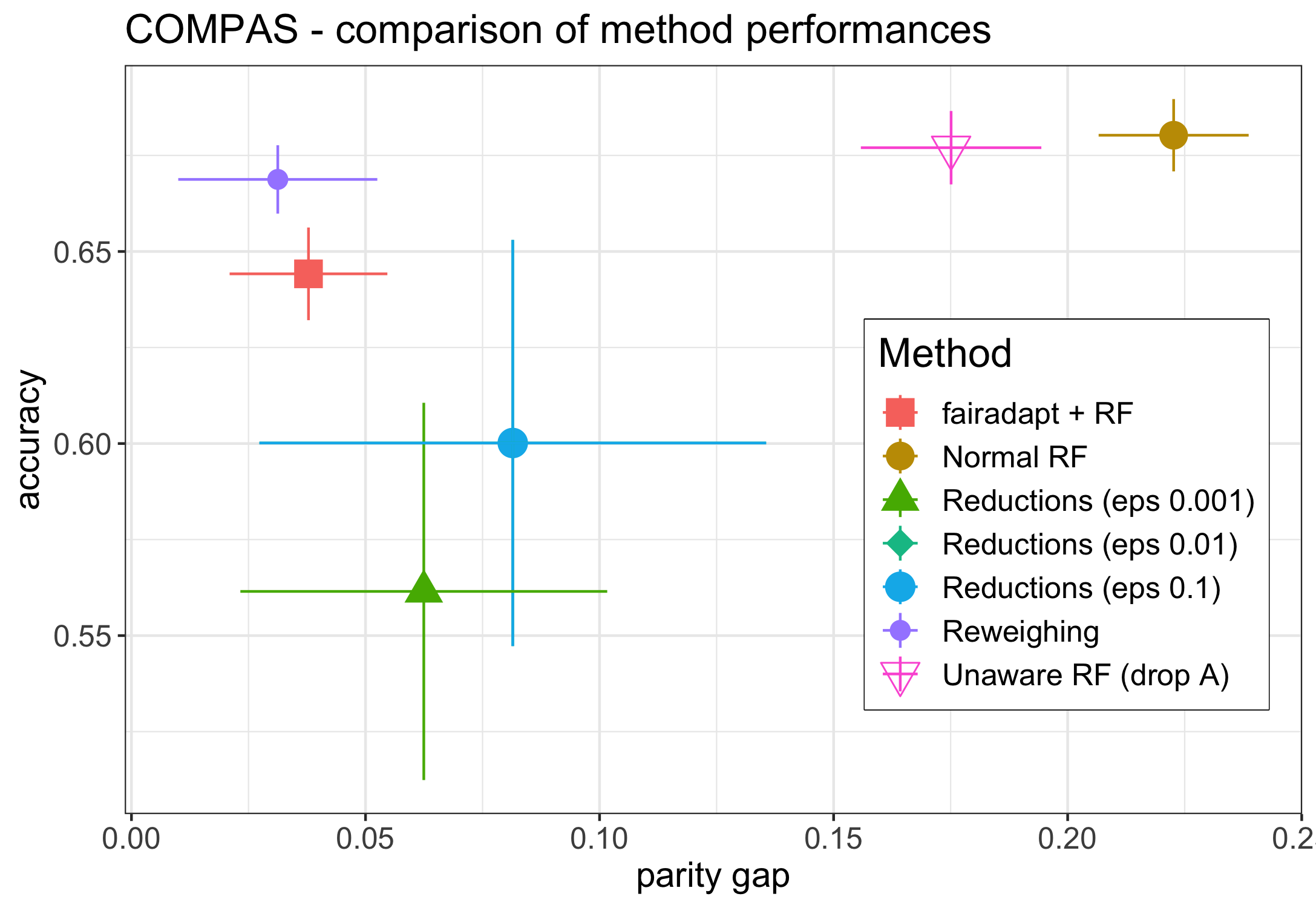} 
        \caption{Comparison of the performance of different fairness methods on the COMPAS dataset. Vertical bars represent standard deviations obtained from 20 repeats.}
				\label{fig:compas}
    \end{minipage}
\end{figure}
For the Adult dataset, no method is better than \texttt{fairadapt} on both criteria. For the COMPAS dataset, only the reweighing method is marginally better. We note our method has very satisfying performance. On top of this, we mention that our method has the ability to relax the fairness criterion via resolving variables, has a causal interpretation and allows for individual level interpretability (under the QPA).

Finally, we take a look at how \texttt{fairadapt} affects the distribution of the positive outcome probabilities. We plot the densities of $\pr(\widehat{Y} = 1 \mid A = a)$ for both levels of $A$ for the two cases of not applying and applying \texttt{fairadapt}. The results are shown in Figures \ref{fig:adultdensity} and \ref{fig:compasdensity}. Note that the densities are much closer when applying \texttt{fairadapt}, indicating a clear reduction in discrimination.

\begin{figure}
    \centering
    \begin{minipage}{0.45\textwidth}
        \centering
        \includegraphics[width=1.1\textwidth]{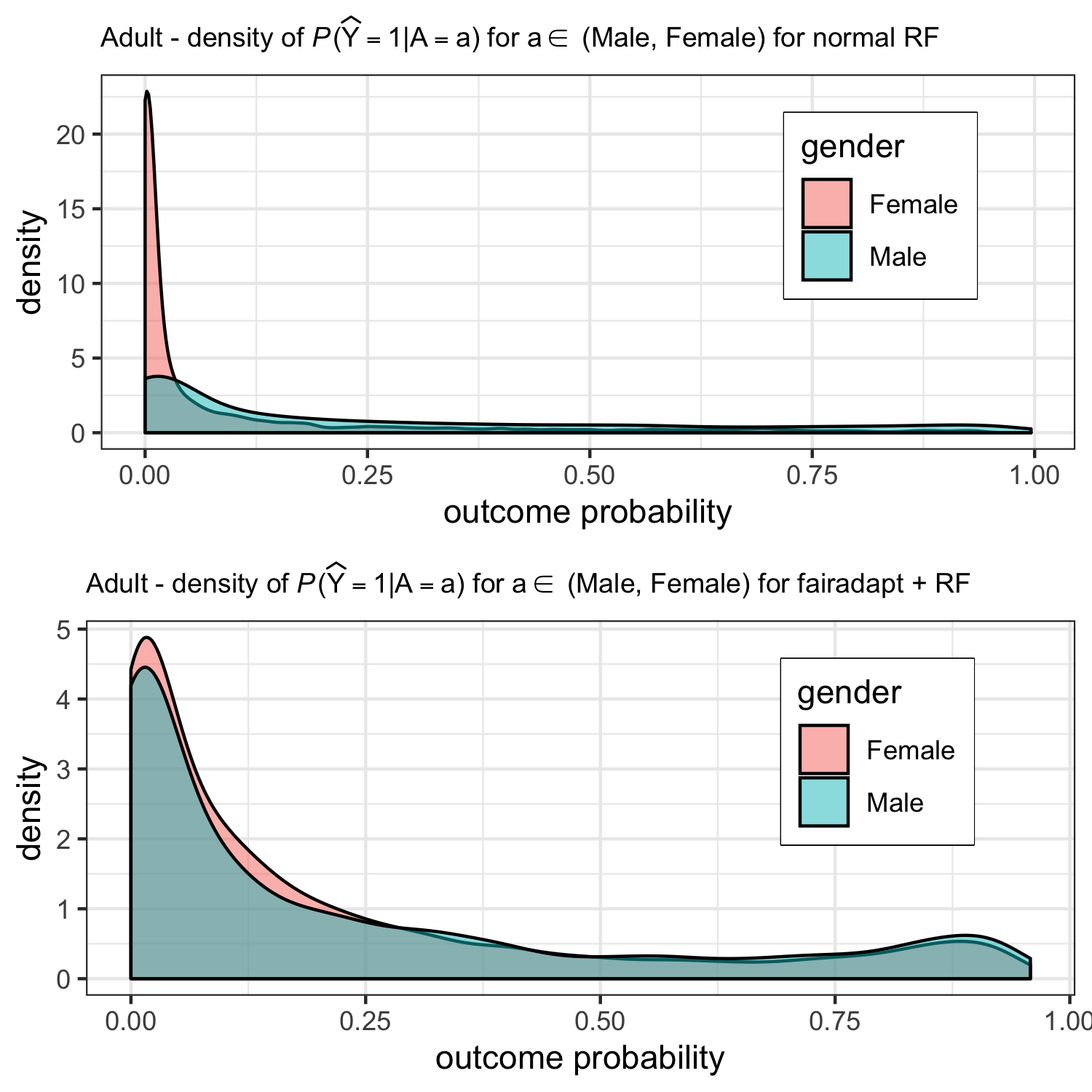} 
        \caption{Change in the positive outcome probability density due to applying \texttt{fairadapt} to UCI Adult.}
				\label{fig:adultdensity}
    \end{minipage}\hfill
    \begin{minipage}{0.45\textwidth}
        \centering
        \includegraphics[width=1.1\textwidth]{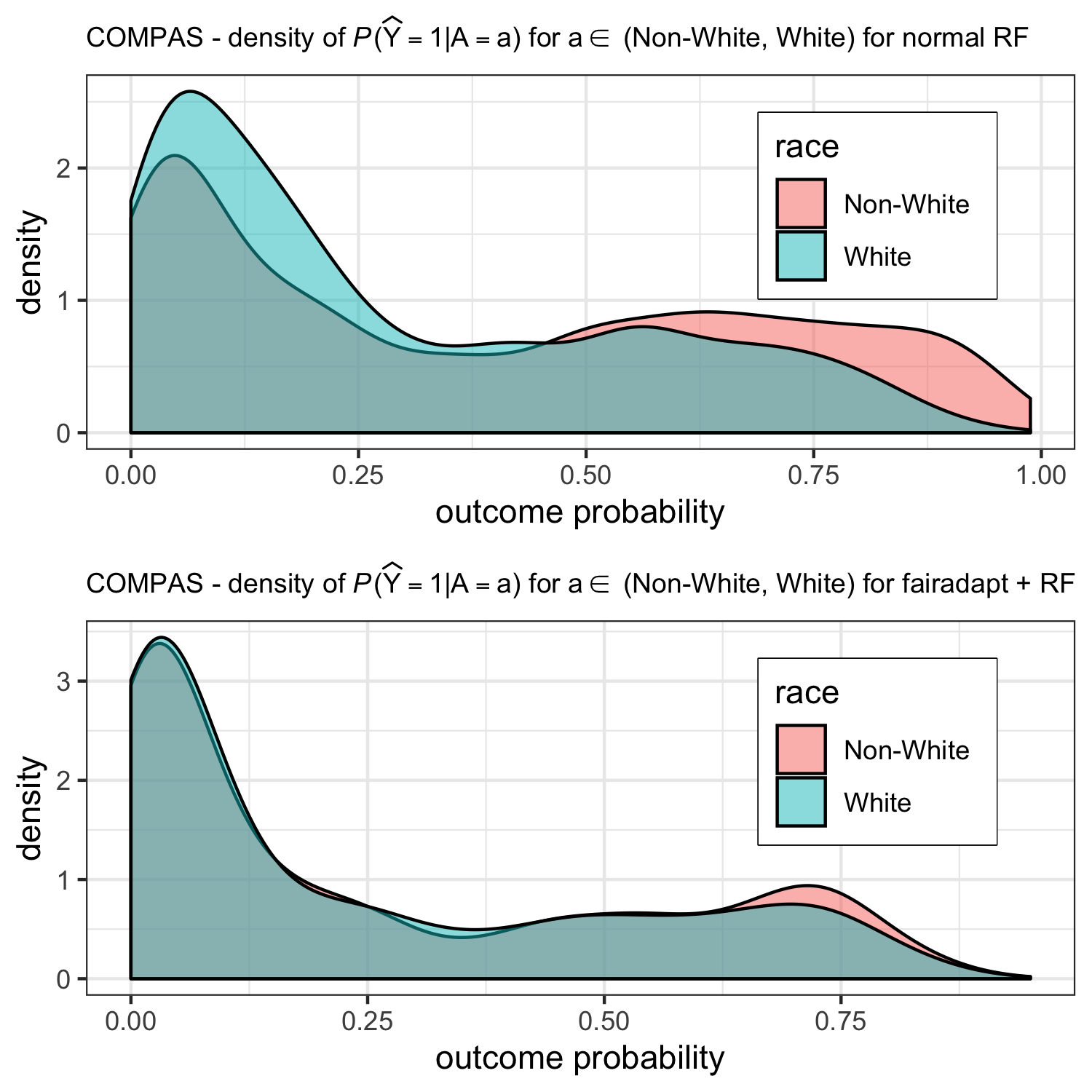} 
        \caption{Change in the positive outcome probability density due to applying \texttt{fairadapt} to COMPAS.}
				\label{fig:compasdensity}
    \end{minipage}
\end{figure}

\section{Conclusion} \label{Conclusion}
In the final section we revisit some of the ideas discussed previously and conclude our argument.
\paragraph*{About observational criteria.} Causal and observational notions of fairness have an inherent link. If the protected attribute is a root node, the intervention on $A$ is equivalent to conditioning on $A$. Causality is necessary, not just to provide new criteria, but to give meaning to the existing observational criteria used.

\paragraph*{About fair data adaptation.} We conclude that \texttt{fairadapt} shows competitive performance compared to other baseline methods in the case of demographic parity. It also gives a causal and interpretable perspective on the data transformation that is carried out. Further, it offers various relaxations of demographic parity, all the way to the case of calibration, which is achieved when all the variables are considered to be resolving. The output of fair data adaptation also allows us to see which values individuals were assigned in the transformation procedure. This helps justify and interpret why a certain individual was given his prediction.

\paragraph*{About the current datasets and methods.} We emphasize that it would be beneficial for the advancement of fairness if there were established real world datasets with agreed-upon causal graphs. This would allow different authors to compare their methods in a meaningful way, demonstrating the performance and measuring different fairness criteria. Having a benchmark for algorithm performance and fairness criteria achieved could also help us understand how and why different methods yield different results on the same datasets. We feel like this is not yet the case and that much more could be done on this front.

\paragraph*{About future work.} We have discussed a method which achieves certain fairness criteria and shown how it can be used in practice. However, this is only the very first step of fairness. A big component of the whole problem that has so far been barely discussed is the temporal implications of fairness criteria on well-being of different groups. The only work on this topic we are currently aware of is \citep{hardtdelayed}. Although many of the fairness criteria make intuitive sense and perhaps have some philosophical backing, we have no reason to convince ourselves that they are necessarily doing the right thing in terms of their long-term effect. This is a serious question and perhaps more involvement is needed from the economics community - many already developed tools could be very useful in this discussion.


\paragraph{Acknowledgements:} We would like to thank Domagoj {\'C}evid, Yuansi Chen and Federico Glaudo for useful discussions and suggestions that greatly improved our work.


\newpage

\appendix
\section{Proof of Theorem \ref{thm:populationlevel}} \label{appendix:theoremproof}
\begin{proof}
	We prove that the transformation $FT(\cdot)$ satisfies $$ FT(X(U = u)) \quad = \quad X(A = 0, R = r, U = u) \quad \forall r, u.$$
	Take any $U = u$ such that $R(U = u) = r$. Under the do$(A = 0, R = r)$ intervention the assignment equations of $A$ and $R$ change to
	\begin{align*}
	A &\gets 0, \\
	R &\gets r,
	\end{align*}
meaning that $FT(A(U = u)) = A(A = 0, U = u)$ and $FT(R(U = u)) = R(A = 0, R = r, U = u)$. Also, for any $V$ non-descendant of $A$ or $R$ we have that
$$ FT(V(U = u)) = V(A = 0, R = r, U = u).$$
We proceed inductively. Let $U^{(V)}$ be the component of $U = u$ corresponding to variable $V$. In the first step, for any $V \in \text{ch}(A)$ we can show that
	\begin{align}
		V(A = 0, R = r, U = u) &= g(\text{pa}(V)(A = 0, R = r, U = u), U^{(V)}) \label{qpr}\\
		&= g(FT(\text{pa}(V)), U^{(V)}) \nonumber\\
		&= FT(V(U = u)) \nonumber
	\end{align}
where the first equality holds by definition of the intervention and the quantile preservation assumption (QPA), the second because we showed $FT(V(U = u)) = V(A = 0, R = r, U = u)$ for all $V \in \lbrace A, R , \text{nde}(A) \rbrace$ (here nde$(A)$ are non-descendants of $A$), the third from the definition of Algorithm \ref{algo:fairnesspopulation}. Using the fact that Algorithm \ref{algo:fairnesspopulation} goes through variables $V$ in topological order, inductively we can show $FT(V(U = u)) = V(A = 0, R = r, U = u)$ for any $V$ in ch(ch($A$)) and so on. This shows that strong resolved fairness holds under the QPA. If the QPA is not used, then the equality \eqref{qpr} does not hold anymore. However, even without QPA it still holds that $V(A = 0, R = r, U') \overset{d}{=} g(\text{pa}(V)(A = 0, R = r, U), U^{(V)})$ where $U, U'$ are now viewed as independent random variables (with a $U[0,1]^{p+2}$ distribution). This is enough to guarantee that $X(A = a, R = r) \; \overset{d}{=} \; X(A = a', R = r) \; \forall a,a',r$. From this it follows that for any classifier $\widehat{Y} = f \circ FT$ we have that $$\widehat{Y}(A = a, R = r) \overset{d}{=} \widehat{Y}(A = a', R =r ).$$
\end{proof}

\section{Probability predictions satisfying resolver-induced parity gap} \label{appendix:whyprobability}
Take the following simple example
\begin{gather*}
	A \gets \text{Bernoulli}(0.5) \\
	X_1 \gets \frac{1}{2}\mathbb{1}(A = 0)+ \epsilon_1 \\
	X_2 \gets \frac{2}{3}(\mathbb{1}(A = 0)-\frac{1}{2}) + \epsilon_2 \\
	Y \gets \text{Bernoulli}(\text{expit}(X_1+X_2))
\end{gather*}
where $\epsilon_1, \epsilon_2$ are both $N(0, \sigma^2)$ variables with $\sigma^2 = 0.05$. Variable $A$ represents gender, with $A = 0$ being the male population. Suppose that $X_2$ is resolving and $X_1$ is not. After adaptation (assuming no estimation error) we have that $FT(X_1) \gets \epsilon_1$ and $FT(Y) \gets \text{Bernoulli}(\text{expit}(FT(X_1)+X_2))$. Plot of the density of the probability of a positive outcome $\pr(FT(Y) = 1 \mid A = a)$ are shown in Figure \ref{fig:RIPGplot}. Note that an optimal probability predictor $\widehat{Y} = \ex \big[ FT(Y) \mid FT(X) \big]$ would have
\begin{align*}
	\ex \big[ \widehat{Y}(A = 0) - \widehat{Y}(A = 1) \big] = \ex \big[ FT(Y) \mid A = 0\big] - \ex \big[ FT(Y) \mid A = 1\big] \approx 0.164.
\end{align*}
However, an optimal $\lbrace 0,1 \rbrace$ classifier $\widetilde{Y}$ trying to minimize (for example) the $L_2$-loss would simply be constructed as $\widetilde{Y} = \mathbb{1}(\widehat{Y} \geq \frac{1}{2})$. Note that (referring to Figure \ref{fig:RIPGplot}) for this $\widetilde{Y}$ we have that $$\ex \big[ \widetilde{Y}(A = 0) - \widetilde{Y}(A = 1)\big] \approx 1.$$
\begin{figure}
	\centering
	\includegraphics[height=60mm,angle=0]{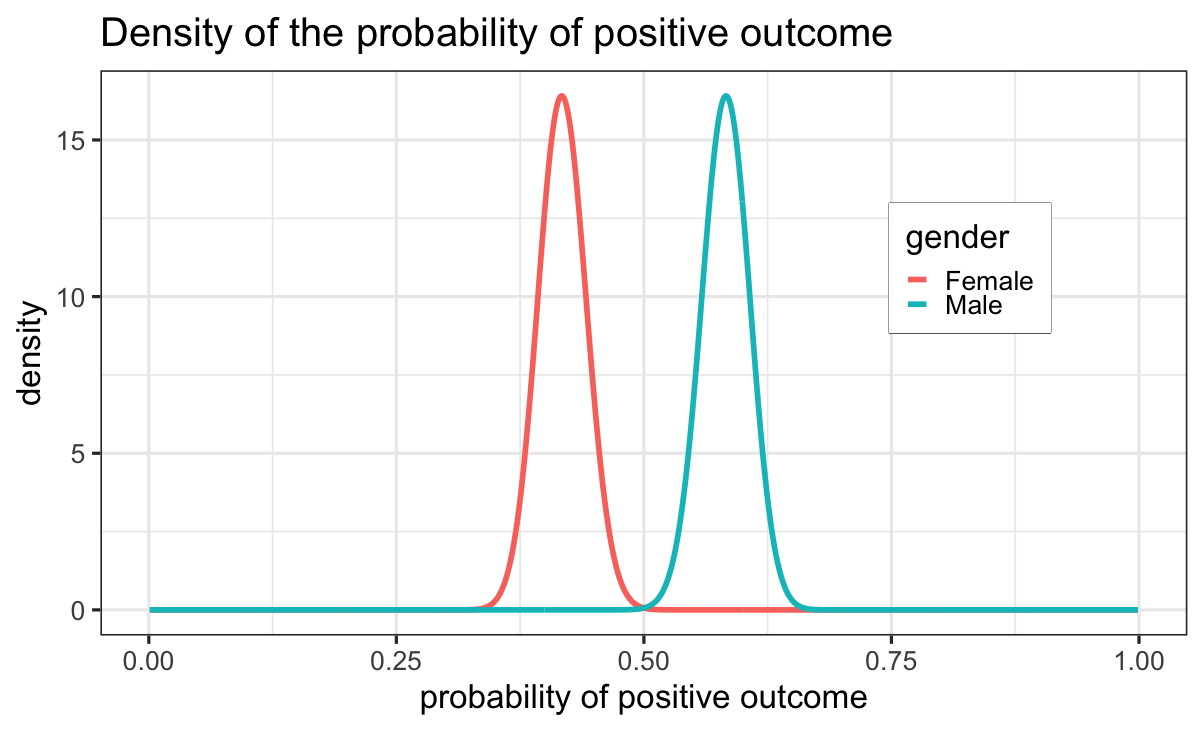}
	\caption{Density of the probability of positive outcome $\pr(FT(Y) = 1)$.}
	\label{fig:RIPGplot}
\end{figure}
Due to examples like this, the criterion \eqref{eq:RIPG} was defined for probability and not class predictions. A much more involved, general discussion of this problem is given in Section \ref{methodformalisation}.
\section{Edge specific extension} \label{appendix:edgeextension}
Consider a dataset consisting of the following features\footnote{This example, not surprisingly, is motivated by COMPAS.}:
\begin{itemize}
	\item protected attribute $A$, in this case race
	\item information about amount of policing the person experiences, $P$ (given explicitly or perhaps implicitly through a ZIP code)
	\item information about prior convictions $C$
	\item recidivism outcome $Y$ when the person is released on parole
\end{itemize}
A possible causal graph for this dataset is given in Figure \ref{fig:policing}(a).
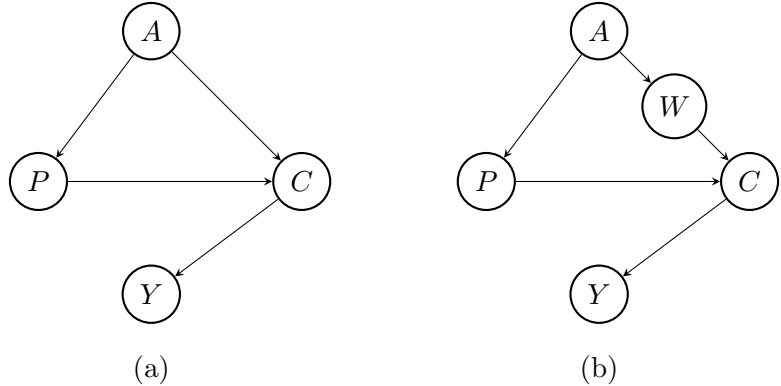
\begin{figure} \centering
	\begin{tikzpicture}
	[>=stealth, rv/.style={circle, draw, thick, minimum size=6mm}, rvc/.style={triangle, draw, thick, minimum size=7mm}, node distance=18mm]
	\pgfsetarrows{latex-latex};
	\begin{scope}
	\node[rv] (1) at (0,2) {$A$};
	\node[rv] (2) at (-1.5,0) {$P$};
	\node[rv] (3) at (2,0) {$C$};
	\node[rv] (4) at (0,-1.5) {$Y$};
	\draw[->] (1) -- (2);
	\draw[->] (1) -- (3);
	\draw[->] (2) -- (3);
	\draw[->] (3) -- (4);
	\end{scope}
	\node[] at (0,-2.5) {(a)};
	\end{tikzpicture}
	\qquad
	\qquad
	\begin{tikzpicture}
	[>=stealth, rv/.style={circle, draw, thick, minimum size=6mm}, rvc/.style={triangle, draw, thick, minimum size=7mm}, node distance=18mm]
	\pgfsetarrows{latex-latex};
	\begin{scope}
	\node[rv] (1) at (0,2) {$A$};
	\node[rv] (2) at (-1.5,0) {$P$};
	\node[rv] (5) at (1,1) {$W$};
	\node[rv] (3) at (2,0) {$C$};
	\node[rv] (4) at (0,-1.5) {$Y$};
	\draw[->] (1) -- (2);
	\draw[->] (1) -- (5);
	\draw[->] (2) -- (3);
	\draw[->] (3) -- (4);
	\draw[->] (5) -- (3);
	\end{scope}
	\node[] at (0,-2.5) {(b)};
	\end{tikzpicture}
	\caption{(a) example that motivates the edge extension of the idea of resolving variables; (b) example where an edge specific extension might arise naturally.}	\label{fig:policing}
\end{figure}
\cite{shpitser2018} considered the variable $C$ as resolving in the COMPAS dataset. If, however, information about policing is available, we might want to account for this. Suppose that the difference in prior convictions between the black and white population was partly due to the fact that black people experience more policing. We would, in this case, consider this effect unfair. Therefore, we need to find a way to remove the $A \rightarrow P \rightarrow C$ effect, but keep the direct $A \rightarrow C$ effect. This example demonstrates that sometimes we perhaps want to have \textit{partially resolving variables}.

We argue that sometimes it is hard to choose if a variable is simply resolving or non-resolving. Going back to the case of policing from Figure \ref{fig:policing}, it would be difficult to determine whether the prior convictions variable $C$ is resolving or non-resolving. In some sense, both choices would be wrong. We therefore think the approach of choosing which edges to remove allows for some additional flexibility with modelling. Another way in which the edge extension might arise naturally is the following. Imagine that the path $A \rightarrow C$ was actually going through some unmeasured variable $W$, as shown in Figure \ref{fig:policing}(b). If we considered $W$ as resolving, removing the effect $A \rightarrow C$ in the original graph might capture what we want to achieve with our adaptation.

For every variable $V$ we need to define its \textit{adaptation parent set}, $\aps (V) \subset \pa (V)$, determining which of the parent variables must change when computing the counterfactual value. The adaptation parent set $\aps(R)$ is the subset of parents whose unfair effect we wish to remove. For a resolving variable $R$, $\aps(R) = \emptyset$. For a non-resolving variable $X$ we have that $\aps(X) = \pa(X)$. In the example from Figure \ref{fig:policing}(c) we have $\aps(C) = {P}$.

The main difference from the original version is in line \ref{CFassign} of Algorithm \ref{algo:fairnesspopulation}, in which we assign the transformed value as
\begin{equation} \label{eq:assign}
	FT(V_k) \gets g_V(U_k,\ FT(\pa(V_k))).
\end{equation}
In the edge specific case, instead of using transformed values of all the parents $\pa(V)$ in the assignment \eqref{eq:assign}, we use the original values of parents in $\pa(V) \setminus \aps(V)$ and the transformed values $FT(\aps(V))$ of the parents in $\aps(V)$.

\section{UCI Adult dataset} \label{appendix:UCI}
We give more details about how we preprocessed the UCI Adult dataset. The preliminary cleaning of the dataset is similar to that of \cite{cleanadult}. In particular, the following operations on the features are performed:
\begin{itemize}
	\item variables "relationship", "final weight", "education" (categorical), "capital gain" and "capital loss" were removed
	\item levels of variable "work class" were merged, so that we obtain four different levels - Government, Self-Employed, Private and Other/Unknown
	\item levels of the variable "marital status" were merged so that we obtain two levels - Married and Not-Married
	\item levels of variable "native country" were merged so that we obtain two levels - US and Non-US
\end{itemize}
Categorical variables that are descendants of gender $A$ were given an ordering, so that the probability of success $\pr(Y = 1 \mid F = f)$ is marginally increasing in levels of $F$. This is described more precisely in Section \ref{categorical}.

From Figure \ref{fig:UCIplots} we see that females in the dataset are much more likely to be in their early twenties and are also more likely to be black than males. Since we do believe that additional edges between $A$ and $C$ are present only due to sampling, we propose a subsampling method to resolve the problem and obtain a dataset for which the causal graph in Figure \ref{fig:causalgraphs}(a) is valid. In particular, we take only the white subpopulation. Since there are strictly more males than females for every age value, we subsample the males randomly so that we achieve exact matching in the age distributions between genders. In this way, we avoid the problem of biased sampling. The dataset still consists of 26052 individuals, which is a sufficient amount of data.

\section{Quantiles are fair} \label{appendix:quantiles}
Here we discuss the motivation behind our method and demonstrate our starting point on a very simple example. Consider the situation corresponding to the causal graph in Figure \ref{fig:motivatingexample}. In standard graphical representations, the nodes corresponding to noise are usually suppressed. Instead of the noise representation, we use the quantile representation. The quantile $U$ determines (together with the parent $A$) which value $X$ takes, meaning that
\begin{equation*}
X = g_X(A,U)
\end{equation*}
where $g_X$ is a deterministic function. It is important to note two facts:
\begin{itemize}
	\item there is a path $U \rightarrow X \rightarrow Y$ which implies that $U$ contains information on $Y$
	\item the path $U \rightarrow X \leftarrow A$ is blocked by $X$ and therefore $U \ci A$
\end{itemize}
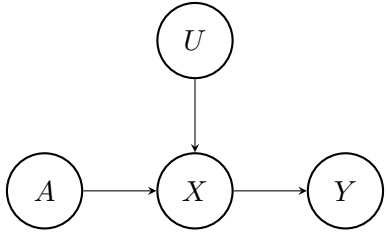
\begin{figure} \centering
	\begin{tikzpicture}
	[>=stealth, rv/.style={circle, draw, thick, minimum size= 10mm}, rvc/.style={triangle, draw, thick, minimum size=10mm}, node distance=25mm]
	\pgfsetarrows{latex-latex};
	\begin{scope}
	\node[rv] (1) at (-2,0) {$A$};
	\node[rv] (2) at (0,0) {$X$};
	\node[rv] (3) at (0,2) {$U$};
	\node[rv] (4) at (2,0) {$Y$};
	\draw[->] (1) -- (2);
	\draw[<-] (2) -- (3);
	\draw[->] (2) -- (4);
	\end{scope}
	\end{tikzpicture}
	\caption{A graphical model representation of a simple example motivating the main ideas in the section.}
	\label{fig:motivatingexample}
\end{figure}
We can immediately notice that this reasoning extends to a general causal graph. In any graph we have that the model distribution factorises as \citep{pearl2009}:
\begin{equation} \label{factorisation}
	f(x_1,...,x_k) = \prod_{i} f(x_i \mid pa(x_i))
\end{equation} The quantile $U_i$ of the corresponding distribution $f(x_i \mid pa(x_i))$ of feature $X_i$ contains information about $Y$ whenever $X_i$ does. This motivates the following proposition:
\begin{proposition} \label{mainprop}
	Consider a distribution $F_{(A,{X},Y)}$ of $(A,{X},Y)$ with a corresponding causal graph $\mathcal{G}$. Define the set $\mathbf{U} = \lbrace U_i : A \notin \de(X_i) \rbrace$, that is the set of quantiles corresponding to features that do not have $A$ as their descendant. Then the following hold:
	\begin{enumerate}[(i)]
		\item $\pa(\widehat{Y})\subset \mathbf{U} \implies \widehat{Y} \ci A$. Therefore, any such predictor $\widehat{Y}$ satisfies demographic parity. \label{firstcond}
		\item Under the (untestable) assumption that the quantiles in $\mathbf{U}$ remain unchanged under a $do(A = 0)$ intervention, our predictor $\widehat{Y}$ also satisfies individual level counterfactual fairness, in the sense that
		\begin{equation*}
			\pr(\widehat{Y} \mid {X} = {x}) = \pr(\widehat{Y}(A=0) \mid {X} = {x})
		\end{equation*}
		In words, this says that the distribution of the predictor does not change under the intervention on the protected attribute for any given individual. \label{untestable}
		\end{enumerate}
\end{proposition}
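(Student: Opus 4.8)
The plan is to pass to the causal graph augmented with the quantile nodes, as in Figure~\ref{fig:motivatingexample}, in which each $U_i$ is a root with unique child $X_i$ and $\widehat{Y}$ is a node whose parent set is, by assumption, contained in $\mathbf{U}$. Consequently we may write $\widehat{Y} = h\big((U_i)_{U_i \in \mathbf{U}}\big)$ for a fixed measurable function $h$; if $\widehat{Y}$ is allowed to be randomized one absorbs an auxiliary independent uniform into $h$, which changes nothing in what follows. Note that none of the quantiles appearing here is $U^{(A)}$, the quantile of the protected attribute.

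For part~(i) I would establish $\widehat{Y}\ci A$ directly from the structural form. Since $A$ is a root, $A = g_A(U^{(A)})$ depends only on $U^{(A)}$, and $U^{(A)}$ is not among the $U_i\in\mathbf{U}$. By the Markovian assumption the components of $U$ are mutually independent, so the whole collection $(U_i)_{U_i\in\mathbf{U}}$ is jointly independent of $U^{(A)}$, and therefore $\widehat{Y} = h\big((U_i)_{U_i\in\mathbf{U}}\big)\ci A$. This is precisely demographic parity in the sense of Definition~\ref{dempar}. (More generally, even without $A$ a root, the condition $A\notin\de(X_i)$ means $U_i$ never propagates into $A$ along the structural equations, so the same conclusion holds; equivalently one reads it off by d-separation, since every path from a node of $\mathbf{U}$ to $A$ leaves that node through its unique out-edge into $X_i$ and, $A$ not being a descendant of $X_i$, must then contain an unconditioned collider.)

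For part~(ii) I would invoke the stated untestable assumption, namely that each $U_i\in\mathbf{U}$ is left unchanged by the $do(A=0)$ intervention, $U_i(A=0) = U_i$. The counterfactual predictor is the same function of the intervened quantiles, $\widehat{Y}(A=0) = h\big((U_i(A=0))_{U_i\in\mathbf{U}}\big)$, so the assumption forces $\widehat{Y}(A=0) = \widehat{Y}$ as random variables, jointly with the factual $X$. Conditioning both sides on $X = x$ then gives $\pr(\widehat{Y}(A=0)\mid X = x) = \pr(\widehat{Y}\mid X = x)$, the asserted individual-level counterfactual fairness.

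Neither half contains a genuinely hard step; the work is in the bookkeeping. The three points to be careful about are: (a) that $\pa(\widehat{Y})\subset\mathbf{U}$ really does license writing $\widehat{Y}$ as a function of those quantiles alone, with no residual dependence on $A$ entering through the remaining exogenous variables --- this is exactly where the mutual-independence (Markovian) assumption is used; (b) that in the discrete case the map $u\mapsto X(U=u)$ is not injective, so $U$ cannot be recovered from $X$, but this is harmless because everything is phrased at the level of the quantiles rather than of the realized values (indeed this is why the quantile parametrization is used here, cf.\ the remark following Definition~\ref{def:qpa}); and (c) being precise about the cross-world object $\widehat{Y}(A=0)$ --- no cross-world reasoning enters part~(i) at all, while in part~(ii) the only thing $do(A=0)$ is permitted to do, by assumption, is leave the nodes of $\mathbf{U}$ fixed, after which the conclusion is immediate.
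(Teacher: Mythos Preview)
Your proposal is correct. For part~(ii) it matches the paper's proof exactly: both simply observe that the invariance assumption on the quantiles forces $\widehat{Y}(A=0)=\widehat{Y}$, from which the conditional statement is immediate.

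For part~(i) there is a minor difference in route. The paper argues purely via d-separation in the augmented graph: any path from $A$ to $U_i$ must pass through $X_i$, and one checks that every such path is blocked by the empty set (either $X_i$ is a collider on the path, or the path would witness $A\in\de(X_i)$, contradicting $U_i\in\mathbf{U}$). Your primary argument instead exploits the standing assumption that $A$ is a root together with the Markovian mutual independence of the $U$'s, which gives $\widehat{Y}\ci A$ in one line without touching the graph. You then recover the paper's d-separation argument in your parenthetical remark covering the general case. Both routes are valid; yours is arguably cleaner under the paper's standing root assumption, while the paper's d-separation argument does not need that assumption and so is slightly more general as stated.
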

\begin{proof}
	For \ref{firstcond} notice that any path from $A$ to $U_i \in \mathbf{U}$ goes through $X_i$. Every path $A \leftrightarrow ... \rightarrow X_i \leftarrow U_i$ is blocked by the empty set $\emptyset$, since $X_i$ is a collider. It remains to check that there is no unblocked path $U_i \rightarrow X_i \rightarrow .... \leftrightarrow A$. Any such path either has a collider or $A \in \de(X_i)$ which is not the case. Therefore $A \ci U_i$, and since $\pa(\widehat{Y}) \subset \mathbf{U}$, it follows that $\widehat{Y} \ci A$. Claim \ref{untestable} is a direct consequence of the assumption that the quantiles $\mathbf{U}$ are unchanged under the $do(A = 0)$ intervention.
\end{proof}
We can now give the intuition for why quantiles are useful. In some sense, the space of quantiles offers a \textit{level playing field} where all levels of the protected attribute are treated the same. Using the quantiles allows us to map both levels $A = 0,1$ onto the same space and treat them equally. There is another, alternative way of doing this - by computing the actual counterfactual values ${X}(A=0)$  under a $do(A = 0)$ intervention. This would correspond to computing the value ${X}$  would have taken, had we hypothetically set $A=0$ for everyone. The latter approach is what we pursue in the paper.

\section{Proof of Proposition \ref{prop:NDE}} \label{appendix:NDE}
\begin{proof}
 Since $\widehat{Y}(A = a, {R} = {r}) \stackrel{d}{=}  \widehat{Y}(A = a', {R} = {r})\;\; \forall r$, then in particular it follows that $$\widehat{Y}(A = a, {R} = R(a')) \quad \stackrel{d}{=}\quad  \widehat{Y}(A = a', {R} = R(a')) .$$ Hence we have that \[E\big[  \widehat{Y}(A = a, {R} = R(a')) - \widehat{Y}(A=a')    \big] = E\big[  \widehat{Y}(A = a', {R} = R(a')) - \widehat{Y}(A=a')    \big] = 0  .\]
\end{proof}



\vskip 0.2in
\bibliographystyle{chicago}
\bibliography{fairnesslib}

\end{document}